\newtheorem{defn}{\textsc{\textbf{Definition}}}
\newtheorem{thm}{\textsc{\textbf{Theorem}}}
\newtheorem{cor}{\textsc{\textbf{Corollary}}}
\newtheorem{lem}{\textsc{\textbf{Lemma}}}
\title{On Influence Functions, Classification Influence, Relative Influence, Memorization and Generalization}
\author{%
  Michael Kounavis \\
  Meta Platforms Inc.\\
  New York, NY \\
  \texttt{michaelkounavis@meta.com} \\
  \And
  Ousmane Dia\\
  Meta Platforms Inc.\\
  Bellevue, WA \\
  \texttt{ousamdia@meta.com} \\
  \And
  Ilqar Ramazanli\\
  Meta Platforms Inc.\\
  New York, NY \\
  \texttt{iramazanli@meta.com} \\
}
\begin{document}

\maketitle

\begin{abstract}
  Machine learning systems such as large scale recommendation systems or natural language processing systems are usually trained on billions of training points and are associated with hundreds of billions or trillions of parameters. Improving the learning process in such a way that both the training load is reduced and the model accuracy improved is  highly desired. In this paper we take a first step toward solving this problem, studying influence functions from the perspective of simplifying the computations they involve. We discuss assumptions, under which influence computations can be performed on significantly fewer parameters. We also demonstrate that the sign of the influence value can indicate whether a training point is to memorize, as opposed to generalize upon. For this purpose we formally define what memorization means for a training point, as opposed to generalization. We conclude that influence functions can be made practical, even for large scale machine learning systems, and that influence values can be taken into account by algorithms that selectively remove training points, as part of the learning process.
\end{abstract}

\section{Introduction}

Influence functions [1][6] are analytical tools that offer measures of the impact of training points on the predictions a neural network makes. The formulas coming from the standard influence function theory involve differentiation and double differentiation over the many billions of parameters of a neural network. Furthermore, differentiation and double differentiation are performed on all training points. In this document we study several simplifications to this process analytically, which significantly reduce the complexity of these computations. Among others:

\begin{itemize}
    \item The neural network model is split into two parts: a heavyweight featurizer and  a lightweight classifier. Influence functions are computed only on the parameters of the classifier.
    \item Double differentiation is performed only on a small number of training points which are sampled.
    \item The focus of the study is on the sign of the influence values. This can be taken into account, for instance, in order to sample training points.
\end{itemize}

The aim of our work is to investigate design principles for machine learning systems where not only the training load is reduced, but also the accuracy improved. For this purpose we explore some properties of influence functions, not fully investigated before. Our findings are summarized as follows:

\begin{itemize}
    \item Influence functions computed on the parameters of a classifier can be good approximations of the standard theory influence functions. This is true when the eigenvalues of the inverse Hessian, associated with the classifier parameters, are the most dominant ones from among all inverse Hessian eigenvalues,  and the same applies to the corresponding eigenvectors.
    \item There exist inverse Hessian substitutes, computed from small subsets of the training data set, which result in the computation of influence values that qualitatively carry the same information as the influence values computed from the inverse Hessians coming from the entire data set.
    \item The sign of the influence value indicates whether a training point is to memorize or not. We formally define what it means for a training point to be memorizable, as opposed to being a point to generalize upon. Thus, keeping the training points to memorize and sampling from the training points to generalize is a valid sampling methodology.
\end{itemize}

\section{Preliminaries and notation}

We consider a data set $\mathcal{Z}$, as a set containing $n$ training or test points $z_i$ each point consisting of a feature vector $x_i$ and a label $y_i$, all of which contain/are real numbers:

\begin{equation}
\label{e:eq1}
\mathcal{Z} = \{(x_i, y_i), i \in [1,n]\}
\end{equation}

We also consider a trained model associated with a parameter set $\theta$. The optimal parameter set $\hat{\theta}$ is the one that minimizes the empirical risk:

\begin{equation}
\label{e:eq2}
R(\theta)= \displaystyle \frac {1}{n} \displaystyle \sum_{i = 1}^{n} {\mathcal{L}(z_i, \theta)}
\end{equation}

for some given loss function $\mathcal{L}()$. Specifically:

\begin{equation}
\label{e:eq3}
\hat{\theta} = \underset{\theta}{\operatorname{argmin}} \> R(\theta)
\end{equation}

The Hessian matrix associated with parameter set
$\hat{\theta}$ and loss function $\mathcal{L}()$ is the symmetric matrix:

\begin{equation}
\label{e:eq4}
H_{\hat{\theta}} = \displaystyle \frac {1}{n} \displaystyle \sum_{i = 1}^{n} {\nabla_{\theta}^2 \> \mathcal{L}(z_i, \hat{\theta})}
\end{equation}

which is considered positive definite and thus invertible. If we assume that there are $\mu$ parameters in the set $\theta$ i.e., $\theta = (\theta_1, ...,  \theta_\mu)^\top$, then the Hessian matrix can also be written as:

\begin{equation}
\label{e:eq5}
H_{\hat{\theta}} = \displaystyle \frac {1}{n} \displaystyle \sum_{i = 1}^{n} {\begin{pmatrix} \displaystyle \frac{\partial^2 \mathcal{L}(z_i, \hat{\theta})} {\partial \theta_1^2} & \cdots & \displaystyle \frac{\partial^2 \mathcal{L}(z_i, \hat{\theta})} {\partial \theta_1 \partial \theta_\mu}\\ \vdots & \ddots & \vdots \\ \displaystyle \frac{\partial^2 \mathcal{L}(z_i, \hat{\theta})} {\partial \theta_\mu \partial \theta_1} & \cdots & \displaystyle \frac{\partial^2 \mathcal{L}(z_i, \hat{\theta})} {\partial \theta_\mu^2} \end{pmatrix}}
\end{equation}

The Hessian matrix is a function of the model parameters.

\section{Standard influence function theory}

We summarize the main points from the work of Koh et. al. [1]. Influence of a training point $z_p$ is defined with respect to a modified empirical risk $R(\theta) + \epsilon \> \mathcal{L}(z_p, \theta)$, in which the loss term associated with point $z_p$ is
multiplied with weight $1 + \epsilon$. The parameter set minimizing this empirical risk is defined as:

\begin{equation}
\label{e:eq6}
\hat{\theta}_{\epsilon, z_p} = \underset{\theta}{\operatorname{argmin}} \> (R(\theta) + \epsilon \> \mathcal{L}(z_p, \theta))
\end{equation}

If we set $\epsilon \leftarrow -\frac{1}{n}$ in relation (\ref{e:eq6}), then we get the empirical risk minimizer associated with a training data set in which the point $z_p$ is absent. In what follows we derive the main influence function formulas, paying attention to the assumptions and simplification introduced in reference [1]. We first set $M(\epsilon) \leftarrow \hat{\theta}_{\epsilon, z_p} $, a function of the scalar $\epsilon$ for fixed $z_p$. Using the Maclaurin expansion formula, we get:

\begin{equation}
\label{e:eq7}
M(\epsilon) = M(0) + M(0)' \cdot \epsilon + \displaystyle\frac{M(0)''}{2!} \cdot \epsilon^2+ \ldots
\end{equation}

From (\ref{e:eq7}), the change in optimal model parameters when training point $z_p$ is multiplied with weight $1 + \epsilon$ is given by:

\begin{equation}
\label{e:eq8}
M(\epsilon) - M(0) = M(0)' \cdot \epsilon + \displaystyle\frac{M(0)''}{2!} \cdot \epsilon^2 + \ldots
\end{equation}

We further simplify (\ref{e:eq8}) by setting:

\begin{equation}
\label{e:eq9}
T_1 \leftarrow \> \displaystyle \frac{M(0)''}{2!} \cdot \epsilon^2 + \ldots
\end{equation}

where term $T_1 = T_1(\epsilon)$ is also a function of $\epsilon$. The change in optimal model parameters becomes:

\begin{equation}
\label{e:eq10a}
\hat{\theta}_{\epsilon, z_p} - \hat{\theta} = M(0)' \cdot \epsilon + T_1=\displaystyle \frac{\partial \hat{\theta}_{\epsilon, z_p}}{\partial \epsilon}\big|_{\> \epsilon\leftarrow 0} \> \cdot \epsilon + T_1
\end{equation}

where $\big|_{\> x \leftarrow v}$ denotes assignment of value $v$ to variable $x$ and:

\begin{equation}
\label{e:eq10}
\hat{\theta}_{-\frac{1}{n}, z_p} - \hat{\theta} =\displaystyle \frac{\partial \hat{\theta}_{\epsilon, z_p}}{\partial \epsilon}\big|_{\> \epsilon\leftarrow 0} \> \cdot (-\displaystyle \frac{1}{n}) + T_1\big|_{\epsilon \leftarrow - \frac{1}{n}}
\end{equation}

We proceed with the derivation of the main influence function formula by observing that, since $\hat{\theta}_{\epsilon, z_p} $ is the minimizer of the empirical risk
$R(\theta) + \epsilon \> \mathcal{L}(z_p, \theta)$, it holds that:

\begin{equation}
\label{e:eq11}
\nabla\big(R(\hat{\theta}_{\epsilon, z_p}) + \>\epsilon \> \mathcal{L}(z_p, \hat{\theta}_{\epsilon, z_p})\big) = 0
\end{equation}

Applying the Taylor expansion formula on relation (\ref{e:eq11}), with $\hat{\theta}$ being treated as constant and $\hat{\theta}_{\epsilon, z_p}$ being treated as the value of the independent variable, and simplifying
$\nabla_{\theta}$ as $\nabla$ we get:

\begin{equation}
\label{e:eq12}
\begin{array}{c}
\nabla \big(R(\hat{\theta})+ \nabla R(\hat{\theta})\cdot(\hat{\theta}_{\epsilon, z_p}-\hat{\theta}) +\displaystyle\frac{1}{2}(\hat{\theta}_{\epsilon, z_p}-\hat{\theta})^\top\cdot\nabla^2R(\hat{\theta})\cdot(\hat{\theta}_{\epsilon, z_p}-\hat{\theta}) \>+T_2'\> +
\vspace{4pt}\\
\epsilon\>\mathcal{L}(z_p, \hat{\theta}) + \epsilon \> \nabla\mathcal{L}(z_p, \hat{\theta})\cdot(\hat{\theta}_{\epsilon, z_p}-\hat{\theta})+\displaystyle\frac{\epsilon}{2} (\hat{\theta}_{\epsilon, z_p}-\hat{\theta})^\top\cdot \nabla^2\mathcal{L}(z_p, \hat{\theta})\cdot(\hat{\theta}_{\epsilon, z_p}-\hat{\theta})\>+T_3'\>\big)=0
\vspace{4pt}\\
\end{array}
\end{equation}

where $T_2'$ and $T_3'$ are Taylor expansion terms or order higher than 2. Next we set:

\begin{equation}
\label{e:eq13}
T_2 = -\nabla \> T_2', \>\> T_3=-\nabla \> T_3'
\end{equation}

From relations (\ref{e:eq12}) and (\ref{e:eq13}), the fact that $\nabla R(\hat{\theta})=0$, we obtain:

\begin{equation}
\label{e:eq14}
\begin{array}{c}
\nabla^2 R(\hat{\theta})\cdot(\hat{\theta}_{\epsilon, z_p}-\hat{\theta}) - T_2 \> + \epsilon \> \nabla\mathcal{L}(z_p, \hat{\theta}) \> +\epsilon \> \nabla^2\mathcal{L}(z_p, \hat{\theta})\cdot(\hat{\theta}_{\epsilon, z_p}-\hat{\theta})\>-T_3 = 0 \>\> \Rightarrow
\vspace{4pt}\\
\nabla^2 R(\hat{\theta})\cdot \displaystyle\frac{\hat{\theta}_{\epsilon, z_p}-\hat{\theta}}{\epsilon} = -\nabla\mathcal{L}(z_p, \hat{\theta}) - \nabla^2\mathcal{L}(z_p, \hat{\theta})\cdot(\hat{\theta}_{\epsilon, z_p}-\hat{\theta})\>+\displaystyle \frac{T_2 + T_3}{\epsilon}
\end{array}
\end{equation}

Finally we compute the limit of both sides of (\ref{e:eq14}) when $\epsilon \rightarrow 0$, considering that $H_{\hat{\theta}}=\nabla^2R(\hat{\theta})$.
The above relation becomes :

\begin{equation}
\label{e:eq15}
\begin{array}{c}
H_{\hat{\theta}}\> \cdot\displaystyle\frac{\partial \hat{\theta}_{\epsilon, z_p}}{\partial \epsilon} \big|_{\epsilon \leftarrow 0}=-\nabla\mathcal{L}(z_p, \hat{\theta})+ \lim_{\epsilon \rightarrow0}\displaystyle\frac{T_2+T_3}{\epsilon} \> \Rightarrow
\vspace{4pt}\\
\displaystyle\frac{\partial \hat{\theta}_{\epsilon, z_p}}{\partial \epsilon} \big|_{\epsilon \leftarrow 0}=- H_{\hat{\theta}}^{-1} \nabla\mathcal{L}(z_p, \hat{\theta})+ \lim_{\epsilon \rightarrow 0} H_{\hat{\theta}}^{-1} \cdot \displaystyle\frac{T_2+T_3}{\epsilon}
\end{array}
\end{equation}

The expression of the right hand side of relation (\ref{e:eq15}) is  the influence function $I_{up}(\hat{\theta}, z_p)$.
Combining (\ref{e:eq10a}), (\ref{e:eq10}) and (\ref{e:eq15}), we obtain an expression for the change in optimal parameters:

\begin{equation}
\label{e:eq16}
\begin{array}{l}
\hat{\theta}_{\epsilon, z_p} - \hat{\theta} = - \epsilon \cdot H_{\hat{\theta}}^{-1} \nabla\mathcal{L}(z_p, \hat{\theta})+ \epsilon \cdot \big(\lim_{\epsilon \rightarrow 0} H_{\hat{\theta}}^{-1} \cdot \displaystyle\frac{T_2+T_3}{\epsilon}\big) + T_1
\vspace{4pt}\\
\hspace{39pt}= \epsilon \cdot I_{up}(\hat{\theta}, z_p) + T_1
\vspace{4pt}\\
\hat{\theta}_{-\frac{1}{n}, z_p} -\hat{\theta} = \displaystyle\frac{1}{n} \cdot H_{\hat{\theta}}^{-1} \nabla\mathcal{L}(z_p, \hat{\theta}) - \displaystyle\frac{1}{n} \cdot \big(\lim_{\epsilon \rightarrow 0} H_{\hat{\theta}}^{-1} \cdot \displaystyle\frac{T_2+T_3}{\epsilon}\big) + T_1 \big|_{\epsilon \leftarrow - \frac{1}{n}}
\vspace{4pt}\\
\hspace{49pt} = -\displaystyle\frac{1}{n} \cdot I_{up}(\hat{\theta}, z_p) + T_1\big|_{\epsilon \leftarrow -\frac{1}{n}}
\vspace{4pt}\\
\end{array}
\end{equation}

\begin{defn}
\label{d:defn1}
(Negligible additive term). An $\mu$-dimensional additive term $\delta \in \mathbb{R}^{\mu}$ is called negligible according to bound $\varepsilon$, norm $\mathbb{L}_q$, function $f$, and input set $X \subseteq \mathbb{R}^{\mu}$ if and only if, for any $x \in X$, it holds that:

\begin{equation}
\label{e:eq17}
|\|f(x+\delta)\|_q-\|f(x)\|_q| \leq \varepsilon
\end{equation}

\end{defn}

In the analysis that follows, and for the sake of simplicity, we will be omitting negligible terms from the relations they are part of. Omitting negligible terms means that these terms must satisfy relation (\ref{e:eq17}) for some $f, \mathbb{L}_q, \epsilon$. The specific $f, \mathbb{L}_q, \epsilon$, for which terms are negligible are also omitted or implied.

\vspace{4pt}
\begin{cor}
\label{c:cor1}
(Simplification of the expressions for the influence function $I_{up}()$ and the model parameter
change $\hat{\theta}_{-\frac{1}{n}, z_p}-\hat{\theta}$).
If the term
$\lim_{\epsilon \rightarrow 0} H_{\hat{\theta}}^{-1} \cdot \displaystyle\frac{T_2+T_3}{\epsilon}$
is negligible according to some bound, norm and input set then:

\begin{equation}
\label{e:eq18}
I_{up}(\hat{\theta}, z_p) =  - H_{\hat{\theta}}^{-1} \nabla\mathcal{L}(z_p, \hat{\theta})
\end{equation}

Furthermore, if the term
$T_1\big|_{\epsilon \leftarrow -\frac{1}{n}} - \displaystyle\frac{1}{n} \cdot \big(\lim_{\epsilon \rightarrow 0} H_{\hat{\theta}}^{-1} \cdot \displaystyle\frac{T_2+T_3}{\epsilon}\big)$
 is negligible then:

\begin{equation}
\label{e:eq19}
\hat{\theta}_{-\frac{1}{n}, z_p} -\hat{\theta}=  \displaystyle\frac{1}{n} \cdot  H_{\hat{\theta}}^{-1} \nabla\mathcal{L}(z_p, \hat{\theta})
\end{equation}
\end{cor}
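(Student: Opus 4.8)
The plan is to read off both equalities directly from relations (\ref{e:eq15}) and (\ref{e:eq16}), which already decompose $I_{up}(\hat{\theta}, z_p)$ and the parameter change $\hat{\theta}_{-\frac{1}{n}, z_p} - \hat{\theta}$ into the claimed leading term plus an explicit residual; the only substantive step is to justify discarding each residual by appeal to Definition \ref{d:defn1}, as the paper's omission convention requires.

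For (\ref{e:eq18}): relation (\ref{e:eq15}) gives $I_{up}(\hat{\theta}, z_p) = - H_{\hat{\theta}}^{-1} \nabla\mathcal{L}(z_p, \hat{\theta}) + \delta$, where $\delta := \lim_{\epsilon \rightarrow 0} H_{\hat{\theta}}^{-1} \cdot \frac{T_2+T_3}{\epsilon}$ is the term assumed negligible. I would set $x := - H_{\hat{\theta}}^{-1} \nabla\mathcal{L}(z_p, \hat{\theta})$ and take $f$ to be the identity map on $\mathbb{R}^{\mu}$; negligibility of $\delta$ according to bound $\varepsilon$, norm $\mathbb{L}_q$ and an input set $X \ni x$ then reads $|\,\|x + \delta\|_q - \|x\|_q\,| \le \varepsilon$, i.e. $I_{up}(\hat{\theta}, z_p)$ and $- H_{\hat{\theta}}^{-1} \nabla\mathcal{L}(z_p, \hat{\theta})$ agree up to $\varepsilon$ in the chosen norm, which under the convention introduced after Definition \ref{d:defn1} is recorded as the equality (\ref{e:eq18}).

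For (\ref{e:eq19}): substituting (\ref{e:eq18}) into the last two lines of (\ref{e:eq16}) — equivalently, collecting the $-\frac{1}{n}$-scaled residual together with $T_1\big|_{\epsilon \leftarrow -\frac{1}{n}}$ — yields $\hat{\theta}_{-\frac{1}{n}, z_p} - \hat{\theta} = \frac{1}{n} H_{\hat{\theta}}^{-1} \nabla\mathcal{L}(z_p, \hat{\theta}) + \delta'$ with $\delta' := T_1\big|_{\epsilon \leftarrow -\frac{1}{n}} - \frac{1}{n}\delta$, which is exactly the term assumed negligible in the second hypothesis. Applying Definition \ref{d:defn1} once more, with $f$ the identity and the leading term $\frac{1}{n} H_{\hat{\theta}}^{-1} \nabla\mathcal{L}(z_p, \hat{\theta})$ in the input set, gives (\ref{e:eq19}) under the same omission convention.

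The main point to watch — more a matter of bookkeeping than a genuine obstacle — is keeping the quantifiers of Definition \ref{d:defn1} consistent: that definition fixes one tuple $(f, \mathbb{L}_q, \varepsilon, X)$ at a time, so one must ensure the relevant leading vector actually lies in the input set $X$ for which negligibility is posited and that a single norm is used throughout the argument. Choosing $f = \mathrm{id}$ and $X$ to be any set containing the single vector of interest makes this automatic, and no new differentiability, invertibility, or limit-interchange beyond what already underlies (\ref{e:eq15})--(\ref{e:eq16}) enters.
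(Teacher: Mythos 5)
Your proposal is correct and matches the paper's (implicit) argument: the paper states Corollary \ref{c:cor1} without a separate proof precisely because both equalities are read off from the decompositions already established in (\ref{e:eq15}) and (\ref{e:eq16}) together with the convention of omitting negligible terms. Your additional care in instantiating Definition \ref{d:defn1} with $f=\mathrm{id}$ and a suitable input set only makes explicit what the paper leaves implied.
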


A second type of influence function measures the impact on the loss experienced at a test point
$z_t=(x_t, y_t)$ when the weight that multiplies the loss component associated with training point
$z_p = (x_p, y_p)$ becomes
$1 + \epsilon$:

\begin{equation}
\label{e:eq20}
\Delta\mathcal{L}_{z_t, z_p}(\hat{\theta}, \epsilon)=\mathcal{L}(z_t, \hat{\theta}_{\epsilon, z_p})-\mathcal{L}(z_t, \hat{\theta})
\end{equation}

When $\epsilon = -\displaystyle\frac{1}{n}$, relation (\ref{e:eq20}) expresses the change in the loss at point $z_t$ when training point $z_p$ is absent. We proceed in the same manner as with the derivation of the formula for $\hat{\theta}_{\epsilon, z_p}-\hat{\theta}$, performing Taylor expansion on the right hand side of relation (\ref{e:eq20}):

\begin{equation}
\label{e:eq21}
\begin{array}{c}
\mathcal{L}(z_t, \hat{\theta}_{\epsilon, z_p})-\mathcal{L}(z_t, \hat{\theta}) = \mathcal{L}(z_t, \hat{\theta}) + \nabla\mathcal{L}(z_t, \hat{\theta})^\top\cdot(\hat{\theta}_{\epsilon, z_p} - \hat{\theta}) + T_4 - \mathcal{L}(z_t, \hat{\theta}) \Leftrightarrow
\vspace{4pt}\\
\mathcal{L}(z_t, \hat{\theta}_{\epsilon, z_p})-\mathcal{L}(z_t, \hat{\theta}) =  \nabla\mathcal{L}(z_t, \hat{\theta})^\top\cdot(\hat{\theta}_{\epsilon, z_p} - \hat{\theta}) + T_4
\end{array}
\end{equation}

where $T_4$ is the sum of Taylor expansion terms of order two or higher.  Substituting $\hat{\theta}_{\epsilon, z_p}-\hat{\theta}$ with the right hand side of the first of the relations (\ref{e:eq16}) we obtain:

\begin{equation}
\label{e:eq22}
\begin{array}{l}
\mathcal{L}(z_t, \hat{\theta}_{\epsilon, z_p})-\mathcal{L}(z_t, \hat{\theta}) =
\vspace{4pt}\\
\hspace{42pt}
- \epsilon \cdot\nabla\mathcal{L}(z_t, \hat{\theta})^\top H_{\hat{\theta}}^{-1}\nabla\mathcal{L}(z_p, \hat{\theta}) +  \epsilon \cdot \nabla \mathcal{L}(z_t, \hat{\theta})^\top \big(\lim_{\epsilon \rightarrow 0} H_{\hat{\theta}}^{-1} \cdot \displaystyle\frac{T_2+T_3}{\epsilon}\big)
\vspace{4pt}\\
\hspace{42pt}
+ \nabla \mathcal{L}(z_t, \hat{\theta})^\top \cdot T_1 +T_4
\end{array}
\end{equation}

Moreover, when $\epsilon = -\displaystyle\frac{1}{n}$, the change in loss
$\mathcal{L}(z_t, \hat{\theta}_{-\frac{1}{n}, z_p})-\mathcal{L}(z_t, \hat{\theta})$
experienced at $z_t$ when $z_p$ is removed from the training data set is given by:

\begin{equation}
\label{e:eq23}
\begin{array}{l}
\mathcal{L}(z_t, \hat{\theta}_{-\frac{1}{n}, z_p})-\mathcal{L}(z_t, \hat{\theta})  =
\vspace{4pt}\\
\hspace{42pt}
\displaystyle\frac{1}{n}  \nabla\mathcal{L}(z_t, \hat{\theta})^\top H_{\hat{\theta}}^{-1}\nabla\mathcal{L}(z_p, \hat{\theta}) - \displaystyle\frac{1}{n} \nabla \mathcal{L}(z_t, \hat{\theta})^\top \big(\lim_{\epsilon \rightarrow 0} H_{\hat{\theta}}^{-1} \cdot \displaystyle\frac{T_2+T_3}{\epsilon}\big)
\vspace{4pt}\\
\hspace{42pt}
+ \nabla \mathcal{L}(z_t, \hat{\theta})^\top \cdot T_1\big|_{\epsilon\leftarrow-\frac{1}{n}} +T_4\big|_{\epsilon\leftarrow-\frac{1}{n}}
\end{array}
\end{equation}

The entity
$\displaystyle\frac{\mathcal{L}(z_t, \hat{\theta}_{\epsilon, z_p})-\mathcal{L}(z_t, \hat{\theta})}{\epsilon}$
is an influence function and is denoted by
$I_{loss}(\epsilon, \hat{\theta}, z_t, z_p)$. An expression for this loss function is given from relation (\ref{e:eq22}):

\begin{equation}
\label{e:eq24}
\begin{array}{l}
I_{loss}(\epsilon, \hat{\theta}, z_t, z_p) =
\vspace{4pt}\\
\hspace{52pt}
- \nabla\mathcal{L}(z_t, \hat{\theta})^\top H_{\hat{\theta}}^{-1}\nabla\mathcal{L}(z_p, \hat{\theta}) +
\vspace{4pt}\\
\hspace{52pt}
\nabla \mathcal{L}(z_t, \hat{\theta})^\top \big(\lim_{\epsilon \rightarrow 0} H_{\hat{\theta}}^{-1} \cdot \displaystyle\frac{T_2+T_3}{\epsilon}\big) + \displaystyle\frac{1}{\epsilon}\nabla \mathcal{L}(z_t, \hat{\theta})^\top \cdot T_1 +\displaystyle\frac{T_4}{\epsilon}
\end{array}
\end{equation}

From (\ref{e:eq23}) and (\ref{e:eq24}) it follows that:

\begin{equation}
\label{e:eq25}
\mathcal{L}(z_t, \hat{\theta}_{-\frac{1}{n}, z_p})-\mathcal{L}(z_t, \hat{\theta}) = -\displaystyle\frac{1}{n} I_{loss}(-\frac{1}{n}, \hat{\theta}, z_t, z_p)
\end{equation}

\begin{cor}
\label{c:cor2}
(Simplification of the expressions for the influence function $I_{loss}()$ and the change in loss experienced at $z_t$).
If the term
$\nabla \mathcal{L}(z_t, \hat{\theta})^\top \big(\lim_{\epsilon \rightarrow 0} H_{\hat{\theta}}^{-1} \cdot \displaystyle\frac{T_2+T_3}{\epsilon}\big) + \displaystyle\frac{1}{\epsilon}\nabla \mathcal{L}(z_t, \hat{\theta})^\top \cdot T_1 +\displaystyle\frac{T_4}{\epsilon}$
is negligible according to some bound, norm and input set, then the influence function $I_{loss}(\epsilon, \hat{\theta}, z_t, z_p)$ is independent of the value of $\epsilon$ and given by:

\begin{equation}
\label{e:eq26}
I_{loss}(\hat{\theta}, z_t, z_p) = - \nabla\mathcal{L}(z_t, \hat{\theta})^\top H_{\hat{\theta}}^{-1}\nabla\mathcal{L}(z_p, \hat{\theta})
\end{equation}

Furthermore, if the term
$- \displaystyle\frac{1}{n} \nabla \mathcal{L}(z_t, \hat{\theta})^\top \big(\lim_{\epsilon \rightarrow 0} H_{\hat{\theta}}^{-1} \cdot \displaystyle\frac{T_2+T_3}{\epsilon}\big) + \nabla \mathcal{L}(z_t, \hat{\theta})^\top \cdot T_1\big|_{\epsilon\leftarrow-\frac{1}{n}} +T_4\big|_{\epsilon\leftarrow-\frac{1}{n}}$
is negligible then the change in loss experienced at test point $z_t$ when point $z_p$ is removed from the training data set is given by:

\begin{equation}
\label{e:eq27}
\mathcal{L}(z_t, \hat{\theta}_{-\frac{1}{n}, z_p})-\mathcal{L}(z_t, \hat{\theta}) =  \displaystyle\frac{1}{n}\nabla\mathcal{L}(z_t, \hat{\theta})^\top H_{\hat{\theta}}^{-1}\nabla\mathcal{L}(z_p, \hat{\theta})
\end{equation}
\end{cor}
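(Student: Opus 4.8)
The corollary follows from the expansions already derived, together with the convention---introduced just after Definition \ref{d:defn1}---that negligible additive terms may be omitted from the relations in which they occur. The plan is therefore purely bookkeeping: in relations (\ref{e:eq24}) and (\ref{e:eq23}) I would isolate the ``main term'' from the ``remainder'', observe that in each case the remainder is exactly the quantity assumed negligible by the corresponding hypothesis, and discard it.

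For the first claim I would start from relation (\ref{e:eq24}), which expresses $I_{loss}(\epsilon, \hat{\theta}, z_t, z_p)$ as the sum of the bilinear term $-\nabla\mathcal{L}(z_t,\hat{\theta})^\top H_{\hat{\theta}}^{-1}\nabla\mathcal{L}(z_p,\hat{\theta})$ and the remainder $\nabla\mathcal{L}(z_t,\hat{\theta})^\top\big(\lim_{\epsilon \rightarrow 0}H_{\hat{\theta}}^{-1}\cdot\frac{T_2+T_3}{\epsilon}\big)+\frac{1}{\epsilon}\nabla\mathcal{L}(z_t,\hat{\theta})^\top\cdot T_1+\frac{T_4}{\epsilon}$. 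Viewing this scalar remainder as a one-dimensional additive term $\delta$ in the sense of Definition \ref{d:defn1}, with $x$ the bilinear main term and $f$ the relevant function, the first hypothesis asserts precisely that $\delta$ is negligible according to some bound, norm, function and input set. Omitting it reduces relation (\ref{e:eq24}) to relation (\ref{e:eq26}). Because the discarded remainder carried \emph{all} of the $\epsilon$-dependence of (\ref{e:eq24})---the bilinear main term contains no $\epsilon$---the simplified expression is independent of $\epsilon$, which is what allows it to be written as $I_{loss}(\hat{\theta}, z_t, z_p)$.

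For the second claim, rather than routing through relation (\ref{e:eq25}), where a factor $-\frac{1}{n}$ multiplies a negligible term and one would then have to re-establish negligibility under a possibly different bound, I would read relation (\ref{e:eq23}) directly. It already displays $\mathcal{L}(z_t, \hat{\theta}_{-\frac{1}{n}, z_p})-\mathcal{L}(z_t, \hat{\theta})$ as $\frac{1}{n}\nabla\mathcal{L}(z_t,\hat{\theta})^\top H_{\hat{\theta}}^{-1}\nabla\mathcal{L}(z_p,\hat{\theta})$ plus a remainder that is \emph{verbatim} the quantity assumed negligible by the second hypothesis; omitting that remainder yields relation (\ref{e:eq27}).

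There is no genuine analytic obstacle; the only points needing care are (i) verifying that the remainder in (\ref{e:eq24}) coincides exactly with the expression named in the first hypothesis and the remainder in (\ref{e:eq23}) with the one named in the second, so that Definition \ref{d:defn1} applies on the nose, and (ii) stating explicitly that the $\epsilon$-independence in the first claim is precisely the observation that every surviving term after the omission is free of $\epsilon$. One may optionally note, as a consistency check, that substituting the simplified (\ref{e:eq26}) at $\epsilon=-\frac{1}{n}$ into (\ref{e:eq25}) reproduces (\ref{e:eq27}) up to $-\frac{1}{n}$ times the quantity from the first hypothesis, which explains why the two negligibility hypotheses are kept separate rather than derived from one another.
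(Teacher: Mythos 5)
Your proposal is correct and follows the same route the paper intends: the paper states Corollary \ref{c:cor2} without a separate proof precisely because, as you observe, the remainders in relations (\ref{e:eq24}) and (\ref{e:eq23}) are verbatim the two quantities hypothesized to be negligible, so the omission convention of Definition \ref{d:defn1} yields (\ref{e:eq26}) and (\ref{e:eq27}) directly. Your added remarks on the $\epsilon$-independence of the surviving term and on why the two negligibility hypotheses are kept separate are consistent with, and slightly more explicit than, the paper's treatment.
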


This completes the presentation of the standard influence theory. The emphasis was on the various assumptions that lead to the simplified formulas of (\ref{e:eq18}), (\ref{e:eq19}), (\ref{e:eq26}) and (\ref{e:eq27}). In the analysis that follows we assume that Corollaries \ref{c:cor1} and \ref{c:cor2} hold.

\begin{figure}
  \centering
  \includegraphics[scale=0.37]{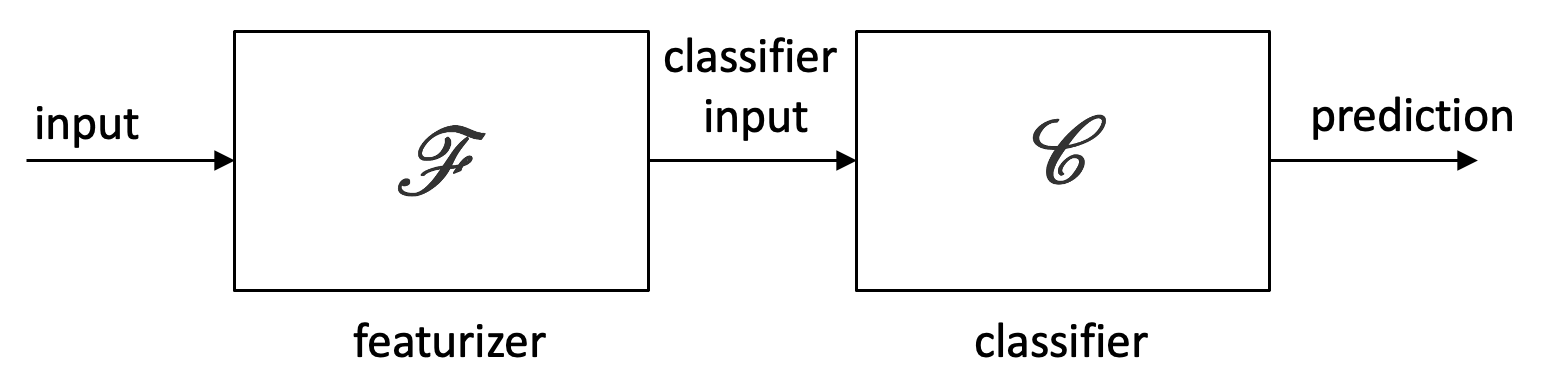}
  \caption{The featurizer-classifier pair}
\label{p:pic1}
\end{figure}

\section{Classification influence}

In this section, we consider a machine learning system consisting of two subsystems: a featurizer $\mathscr{F}()$
and a classifier $\mathscr{C}()$. These are serially connected as shown in Figure \ref{p:pic1}. A prediction $\tilde{y}$ associated with a feature vector $x$  is computed as the output of the classifier, where the corresponding input to the classifier is the output of the featurizer:

\begin{equation}
\label{e:eq28}
\begin{array}{c}
\tilde{y} = \mathscr{C}(\mathscr{F}(x)) = \mathscr{C}(v)
\vspace{4pt}\\
v=\mathscr{F}(x)
\end{array}
\end{equation}

We consider that the model parameter set $\theta$ is the union of two disjoint sets. One is the set of parameters of the featurizer $\theta_{\mathscr{F}}$ and the other the set of parameters of the classifier $\theta_{\mathscr{C}}()$. Similarly the set of parameter values $\hat{\theta}$ that minimizes the empirical risk of relation (\ref{e:eq2}) is the union of two disjoint sets of optimal parameter values: One is associated with the featurizer $\hat{\theta}_{\mathscr{F}}$ and one is associated with the classifier $\hat{\theta}_{\mathscr{C}}$:

\begin{equation}
\label{e:eq29}
\theta = \theta_{\mathscr{F}} \cup \theta_{\mathscr{C}},
\hspace{6pt}
\hat{\theta}=\hat{\theta}_{\mathscr{F}}\cup \hat{\theta}_{\mathscr{C}}
\end{equation}

Introducing the parameter sets $\theta_{\mathscr{F}}$ and $\theta_{\mathscr{F}}$ in relations (\ref{e:eq28}) we have:

\begin{equation}
\label{e:eq30}
\tilde{y} = \mathscr{C}(\theta_{\mathscr{C}}, v), \>\>v=\mathscr{F}(\theta_{\mathscr{F}}, x)
\end{equation}

Let $\mu$ denote the number of parameters in $\theta$ and $\mu_{\mathscr{C}}$ the number of parameters in $\theta_{\mathscr{C}}$. Without loss of generality we consider that an order exists for the parameters of the set $\theta$ such that the first $\mu_{\mathscr{C}}$ parameters are the parameters of set $\theta_{\mathscr{C}}$, i.e.,
$\theta^{\top}=(\theta_{\mathscr{C}}^{\top}\>\>\theta_{\mathscr{F}}^{\top})$. Given a training point
$z_i =(x_i, y_i), i \in[1, n]$, we denote by $\zeta_i = (v_i, y_i)$ a corresponding training point consisting of input feature vector $v_i = \mathscr{F}(\hat{\theta}_{\mathscr{F}}, x_i)$ and label $y_i$. We similarly define a training data set $\mathscr{Z}$ associated with the classifier $\mathscr{C}()$ as the set:

\begin{equation}
\label{e:eq31}
\mathscr{Z}=\{\zeta_i = (\mathscr{F}(\hat{\theta}_{\mathscr{F}}, x_i), y_i): (x_i, y_i) \in \mathcal{Z}, i \in [1,n]\}
\end{equation}

where set $\mathcal{Z}$ is the set used for training the complete system i.e., the featurizer-classifier pair. Set $\mathscr{Z}$ contains feature vector-label pairs, where feature vectors are the outputs the featurizer. Furthermore, the processing performed by the featurizer uses the optimal parameter value set  $\hat{\theta}_{\mathscr{F}}$. From the definition of set $\mathscr{Z}$ it follows that:

\begin{equation}
\label{e:eq32}
\mathcal{L}((x_i, y_i), \hat{\theta}) = \mathcal{L}((\mathscr{F}(x_i),y_i), \hat{\theta}_{\mathscr{C}}) \> \Leftrightarrow\> \mathcal{L}(z_i, \hat{\theta})=\mathcal{L}(\zeta_i, \hat{\theta}_{\mathscr{C}})
\end{equation}

Moreover, the Hessian matrices
$H_{\hat{\theta}} = \frac {1}{n} \sum_{i = 1}^{n} {\nabla_{\theta}^2 \> \mathcal{L}(z_i, \hat{\theta})}$ and
$H_{\hat{\theta}_{\mathscr{C}}} = \frac {1}{n} \sum_{i = 1}^{n} {\nabla_{\theta_{\mathscr{C}}}^2 \> \mathcal{L}(\zeta_i, \hat{\theta}_{\mathscr{C}})}$ satisfy:

\begin{equation}
\label{e:eq33}
H_{\hat{\theta}}= \begin{pmatrix} \>\>\> H_{\hat{\theta}_{\mathscr{C}}} & A \>\>\vspace{4pt}\\ A^\top & B\end{pmatrix}
\end{equation}

for some matrices $A \in M_{\mu_{\mathscr{C}}\times (\mu - \mu_{\mathscr{C}})}(\mathbb{R})$
and $B \in M_{ (\mu - \mu_{\mathscr{C}}) \times (\mu - \mu_{\mathscr{C}})}(\mathbb{R})$.
The following discussion \vspace{3pt}\\ focuses on Hessian matrices $H_{\hat{\theta}}$ for which their associated $H_{\hat{\theta}_{\mathscr{C}}}$ is positive definite.

\begin{defn}
\label{d:defn2}
(Classification Influence Function $I_{up}^{(\mathscr{C})}(\hat{\theta}_{\mathscr{C}}, \zeta_p)$).
Let $\mathcal{Z}$ be a training data set and $z_p = (x_p, y_p) \in \mathcal{Z}$ a training point. Let also $\hat{\theta} = \hat{\theta}_{\mathscr{F}} \cup  \hat{\theta}_{\mathscr{C}}$ be the parameter values of a machine learning system that minimize the empirical risk of relation (\ref{e:eq2}). The machine learning system consists of a featurizer and a classifier that satisfy relation (\ref{e:eq30}). Finally, let $\mathscr{Z}$ be the training data set defined by relation (\ref{e:eq31}), $\zeta_p$ be the point
$(\mathscr{F}(x_p), y_p)$, and $H_{\hat{\theta}_{\mathscr{C}}}$ the Hessian matrix related to $H_{\hat{\theta}}$ according to the right hand side of (\ref{e:eq33}). The classification influence function
$I_{up}^{(\mathscr{C})}(\hat{\theta}_{\mathscr{C}}, \zeta_p)$ is defined as:

\begin{equation}
\label{e:eq34}
I_{up}^{(\mathscr{C})}(\hat{\theta}_{\mathscr{C}}, \zeta_p) = -H_{\hat{\theta}_{\mathscr{C}}}^{-1}\cdot \nabla_{\theta_{\mathscr{C}}} \mathcal{L}(\zeta_p, \hat{\theta}_{\mathscr{C}})
\end{equation}
\end{defn}

\begin{defn}
\label{d:defn3}
(Classification Influence Function $I_{loss}^{(\mathscr{C})}(\hat{\theta}_{\mathscr{C}}, \zeta_t, \zeta_p)$).
Let $\mathcal{Z}$ be a training data set and $z_p = (x_p, y_p) \in \mathcal{Z}$ a training point. Let also $\hat{\theta} = \hat{\theta}_{\mathscr{F}} \cup  \hat{\theta}_{\mathscr{C}}$,
$\mathscr{Z}$, $\zeta_p$, and $H_{\hat{\theta}_{\mathscr{C}}}$ be as in Definition \ref{d:defn2}.  Finally, let $\zeta_t=(\mathscr{F}(x_t),y_t)$ be a test point associated with the feature vector-label pair $z_t = (x_t, y_t)$.  The classification influence function
$I_{loss}^{(\mathscr{C})}(\hat{\theta}_{\mathscr{C}}, \zeta_t, \zeta_p)$ is defined as:

\begin{equation}
\label{e:eq35}
I_{loss}^{(\mathscr{C})}(\hat{\theta}_{\mathscr{C}}, \zeta_t, \zeta_p) = - \nabla_{\theta_{\mathscr{C}}} \mathcal{L}(\zeta_t, \hat{\theta_{\mathscr{C}}})^\top \cdot H_{\hat{\theta}_{\mathscr{C}}}^{-1} \cdot \nabla_{\theta_{\mathscr{C}}} \mathcal{L}(\zeta_p, \hat{\theta}_{\mathscr{C}})
\end{equation}
\end{defn}

The classification influence functions $I_{up}^{(\mathscr{C})}(\hat{\theta}_{\mathscr{C}}, \zeta_p)$ and $I_{loss}^{(\mathscr{C})}(\hat{\theta}_{\mathscr{C}}, \zeta_t, \zeta_p)$ are not measures on the change in model parameters or loss experienced at $\zeta_t$ when the training point $\zeta_p$ is absent. This is because the classifier is not trained on the data set $\mathscr{Z}$. Instead, the classifier is trained as part of the complete featurizer-classifier pair using set $\mathcal{Z}$.

The purpose of introducing functions $I_{up}^{(\mathscr{C})}(\hat{\theta}_{\mathscr{C}}, \zeta_p)$ and  $I_{loss}^{(\mathscr{C})}(\hat{\theta}_{\mathscr{C}}, \zeta_t, \zeta_p)$ is to have lightweight approximations for the influence functions $I_{up}(\hat{\theta}, z_p)$ and $I_{loss}(\hat{\theta}, z_t, z_p)$, where such approximations may be much easier to compute. Indeed, if we assume that the number of parameters in the featurizer-classifier pair $\mu$ is in the order of hundreds of millions  and the number of parameters in the classifier $\mu_{\mathscr{C}}$ is in the order of thousands, then computing $I_{up}^{(\mathscr{C})}(\hat{\theta}_{\mathscr{C}}, \zeta_p)$ and  $I_{loss}^{(\mathscr{C})}(\hat{\theta}_{\mathscr{C}}, \zeta_t, \zeta_p)$  efficiently may be possible, whereas computing  $I_{up}(\hat{\theta}, z_p)$ and $I_{loss}(\hat{\theta}, z_t, z_p)$ may be infeasible due to the fact that the Hessian $H_{\hat{\theta}}$ and its inverse consist of quadrillions of elements.

In what follows we investigate under which assumptions functions $I_{up}^{(\mathscr{C})}(\hat{\theta}_{\mathscr{C}}, \zeta_p)$ and  $I_{loss}^{(\mathscr{C})}(\hat{\theta}_{\mathscr{C}}, \zeta_t, \zeta_p)$ are good approximations of $I_{up}(\hat{\theta}, z_p)$ and $I_{loss}(\hat{\theta}, z_t, z_p)$.

\begin{thm}
\label{t:thm1}
(On the Accuracy of Classification Influence Functions). Let $Q =\begin{pmatrix} Q_{\mathscr{C}} & \alpha\\ \beta & \gamma\end{pmatrix}$
be an orthonormal eigenbasis for the Hessian matrix $H_{\hat{\theta}}$, as well as its inverse  $H_{\hat{\theta}}^{-1}$, where
$Q_{\mathscr{C}} \in M_{\mu_{\mathscr{C}} \times \mu_{\mathscr{C}}}(\mathbb{R})$. Let $\{\lambda_1,\ldots,\lambda_{\mu_{\mathscr{C}}},\lambda_{\mu_{\mathscr{C}}+1},\ldots,\lambda_{\mu}\}$ be the set of eigenvalues of $H_{\hat{\theta}}$ corresponding to the eigenbasis $Q$. Finally, let $\Lambda_{\mathscr{C}}$ and $\Lambda_{\mathscr{F}}$ be the  diagonal matrices containing the eigenvalue sets $\{\lambda_1,\ldots,\lambda_{\mu_{\mathscr{C}}}\}$ and $\{ \lambda_{\mu_{\mathscr{C}}+1},\ldots,\lambda_{\mu}\}$
respectively, i.e.,

\begin{equation}
\label{e:eq36}
\Lambda_{\mathscr{C}}= \begin{pmatrix} \lambda_1 & \cdots & 0 \\ \vdots& \ddots &  \vdots\\ 0& \cdots& \lambda_{\mu_{\mathscr{C}}}\end{pmatrix},
\hspace{6pt}
\Lambda_{\mathscr{F}}= \begin{pmatrix} \lambda_{\mu_{\mathscr{C}}+1} & \cdots & 0 \\ \vdots& \ddots &  \vdots\\ 0& \cdots& \lambda_{\mu}\end{pmatrix}
\end{equation}

\noindent
such that:

\begin{equation}
\label{e:eq37}
\begin{array}{c}
H_{\hat{\theta}}= \begin{pmatrix} Q_{\mathscr{C}}^\top & \beta^\top \vspace{3pt}\\ \alpha^\top & \gamma^\top\end{pmatrix}  \begin{pmatrix} \Lambda_{\mathscr{C}} &  0 \vspace{3pt}\\ 0 & \Lambda_{\mathscr{F}}  \end{pmatrix} \begin{pmatrix} Q_{\mathscr{C}} & \alpha \vspace{3pt}\\ \beta & \gamma \end{pmatrix},
\hspace{6pt}
H_{\hat{\theta}}^{-1} = \begin{pmatrix} Q_{\mathscr{C}}^\top & \beta^\top \vspace{3pt}\\ \alpha^\top & \gamma^\top\end{pmatrix}  \begin{pmatrix} \Lambda_{\mathscr{C}}^{-1} &  0 \vspace{3pt}\\ 0 & \Lambda_{\mathscr{F}}^{-1}  \end{pmatrix} \begin{pmatrix} Q_{\mathscr{C}} & \alpha \vspace{3pt}\\ \beta & \gamma \end{pmatrix}
\vspace{4pt}\\
\end{array}
\end{equation}

\noindent
Then, if Corollaries \ref{c:cor1} and \ref{c:cor2} hold, the ratio $\displaystyle\frac{I_{loss}^{(\mathscr{C})}(\hat{\theta}_{\mathscr{C}}, \zeta_t, \zeta_p)}{I_{loss}(\hat{\theta}, z_t, z_p)}$ is given by:

\begin{equation}
\label{e:eq38}
\displaystyle\frac{I_{loss}^{(\mathscr{C})}(\hat{\theta}_{\mathscr{C}}, \zeta_t, \zeta_p)}{I_{loss}(\hat{\theta, z_t, z_p})} = \displaystyle\frac{g_t^\top (Q_{\mathscr{C}}^\top \Lambda_{\mathscr{C}}Q_{\mathscr{C}}\>+ \mathcal{T}_1)^{-1} g_p}{g_t^\top Q_{\mathscr{C}}^\top \Lambda^{-1}_{\mathscr{C}}Q_{\mathscr{C}}g_p \> + \> \mathcal{T}_2}
\end{equation}

\noindent
where:

\begin{equation}
\label{e:eq39}
g_t = \nabla_{\theta_{\mathscr{C}}}\mathcal{L}(\zeta_t, \hat{\theta}_{\mathscr{C}})
\end{equation}

\begin{equation}
\label{e:eq40}
g_p = \nabla_{\theta_{\mathscr{C}}}\mathcal{L}(\zeta_p, \hat{\theta}_{\mathscr{C}})
\end{equation}

\begin{equation}
\label{e:eq41}
\mathcal{T}_1 = \beta^\top \Lambda_{\mathscr{F}}\beta
\end{equation}

\begin{equation}
\label{e:eq42}
\mathcal{T}_2= g_t^\top \mathsf{F}_1 g_p+ g_t^\top \mathsf{F}_2f_p + f_t^\top \mathsf{F}_2^\top g_p + f_t^\top \mathsf{F}_3 f_p
\end{equation}

\begin{equation}
\label{e:eq43}
f_t = \nabla_{\theta_{\mathscr{F}}}\mathcal{L}(z_t, \hat{\theta})
\end{equation}

\begin{equation}
\label{e:eq44}
f_p = \nabla_{\theta_{\mathscr{F}}}\mathcal{L}(z_p, \hat{\theta})
\end{equation}

\begin{equation}
\label{e:eq45}
\mathsf{F}_1 =  \beta^\top \Lambda_{\mathscr{F}}^{-1} \beta
\end{equation}

\begin{equation}
\label{e:eq46}
\mathsf{F}_2 = Q_{\mathscr{C}}^\top \Lambda_{\mathscr{C}}^{-1} \alpha + \beta^\top \Lambda_{\mathscr{F}}^{-1} \gamma
\end{equation}

\begin{equation}
\label{e:eq47}
\mathsf{F}_3 = \alpha^\top \Lambda_{\mathscr{C}}^{-1} \alpha + \gamma^\top \Lambda_{\mathscr{F}}^{-1} \gamma
\end{equation}
\end{thm}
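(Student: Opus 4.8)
The plan is to reduce the entire statement to block-matrix algebra on the $2\times 2$ partition induced by the chosen ordering $\theta^\top=(\theta_{\mathscr{C}}^\top\ \ \theta_{\mathscr{F}}^\top)$, and then read off the ratio directly from the defining formulas~(\ref{e:eq26}) and~(\ref{e:eq35}). First I would write the full-model loss gradient in block form: since the classifier parameters come first, $\nabla_{\theta}\mathcal{L}(z_p,\hat{\theta})$ stacks $\nabla_{\theta_{\mathscr{C}}}\mathcal{L}(z_p,\hat{\theta})$ on top of $\nabla_{\theta_{\mathscr{F}}}\mathcal{L}(z_p,\hat{\theta})$. By relation~(\ref{e:eq32}), the map $\theta_{\mathscr{C}}\mapsto\mathcal{L}(z_p,\hat{\theta})$ with $\theta_{\mathscr{F}}$ held at $\hat{\theta}_{\mathscr{F}}$ is literally the map $\theta_{\mathscr{C}}\mapsto\mathcal{L}(\zeta_p,\hat{\theta}_{\mathscr{C}})$, so the upper block equals $g_p$ of~(\ref{e:eq40}); the lower block equals $f_p$ of~(\ref{e:eq44}) by definition, and the same holds at $z_t$. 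Hence $\nabla_{\theta}\mathcal{L}(z_p,\hat{\theta})=(g_p^\top\ f_p^\top)^\top$ and $\nabla_{\theta}\mathcal{L}(z_t,\hat{\theta})=(g_t^\top\ f_t^\top)^\top$.

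For the denominator I would substitute the eigendecomposition of $H_{\hat{\theta}}^{-1}$ from~(\ref{e:eq37}) into $I_{loss}(\hat{\theta},z_t,z_p)=-\nabla\mathcal{L}(z_t,\hat{\theta})^\top H_{\hat{\theta}}^{-1}\nabla\mathcal{L}(z_p,\hat{\theta})$, compute $Q(g_p^\top\ f_p^\top)^\top=(Q_{\mathscr{C}}g_p+\alpha f_p,\ \beta g_p+\gamma f_p)^\top$ together with its counterpart at $t$, and expand the resulting eight scalar products. Grouping them by which of the four pairs $(g_t,g_p),(g_t,f_p),(f_t,g_p),(f_t,f_p)$ they contract, the coefficient of $(g_t,g_p)$ is $Q_{\mathscr{C}}^\top\Lambda_{\mathscr{C}}^{-1}Q_{\mathscr{C}}+\mathsf{F}_1$, that of $(g_t,f_p)$ is $\mathsf{F}_2$, that of $(f_t,g_p)$ is $\mathsf{F}_2^\top$, and that of $(f_t,f_p)$ is $\mathsf{F}_3$, with $\mathsf{F}_1,\mathsf{F}_2,\mathsf{F}_3$ exactly as in~(\ref{e:eq45})--(\ref{e:eq47}). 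Splitting off the leading term yields $I_{loss}(\hat{\theta},z_t,z_p)=-\big(g_t^\top Q_{\mathscr{C}}^\top\Lambda_{\mathscr{C}}^{-1}Q_{\mathscr{C}}g_p+\mathcal{T}_2\big)$ with $\mathcal{T}_2$ as in~(\ref{e:eq42}).

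For the numerator I would use that, by~(\ref{e:eq33}), $H_{\hat{\theta}_{\mathscr{C}}}$ is the top-left $\mu_{\mathscr{C}}\times\mu_{\mathscr{C}}$ block of $H_{\hat{\theta}}$. Multiplying out the right-hand side of the first identity in~(\ref{e:eq37}) and extracting that block gives $H_{\hat{\theta}_{\mathscr{C}}}=Q_{\mathscr{C}}^\top\Lambda_{\mathscr{C}}Q_{\mathscr{C}}+\beta^\top\Lambda_{\mathscr{F}}\beta=Q_{\mathscr{C}}^\top\Lambda_{\mathscr{C}}Q_{\mathscr{C}}+\mathcal{T}_1$ with $\mathcal{T}_1$ as in~(\ref{e:eq41}); this matrix is invertible because $H_{\hat{\theta}_{\mathscr{C}}}$ is assumed positive definite, so~(\ref{e:eq35}) gives $I_{loss}^{(\mathscr{C})}(\hat{\theta}_{\mathscr{C}},\zeta_t,\zeta_p)=-g_t^\top\big(Q_{\mathscr{C}}^\top\Lambda_{\mathscr{C}}Q_{\mathscr{C}}+\mathcal{T}_1\big)^{-1}g_p$. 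Dividing numerator by denominator, the two overall minus signs cancel and relation~(\ref{e:eq38}) follows.

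The argument is almost entirely routine bookkeeping, and I do not anticipate a genuine obstacle; the steps most deserving of care are: respecting the paper's convention $H_{\hat{\theta}}=Q^\top\Lambda Q$ (rather than $Q\Lambda Q^\top$), so that the sub-blocks $Q_{\mathscr{C}},\alpha,\beta,\gamma$ land in the correct positions when the product is expanded; verifying the transpose-symmetry between the $(g_t,f_p)$ and $(f_t,g_p)$ coefficients, which is what makes the compact form~(\ref{e:eq42}) of $\mathcal{T}_2$ correct; checking that the expansion of the top-left block of $Q^\top\Lambda Q$ really contributes $\beta^\top\Lambda_{\mathscr{F}}\beta$ as its only cross term beyond $Q_{\mathscr{C}}^\top\Lambda_{\mathscr{C}}Q_{\mathscr{C}}$; and noting that $\Lambda_{\mathscr{C}}$ is invertible because $H_{\hat{\theta}}$ is positive definite, so all of $\lambda_1,\ldots,\lambda_\mu$ are nonzero. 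Orthonormality of $Q$ enters only through making relation~(\ref{e:eq37}) — in particular the matching decomposition of $H_{\hat{\theta}}^{-1}$ with the same $Q$ — a valid identity.
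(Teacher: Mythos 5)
Your proposal is correct and follows essentially the same route as the paper's own proof: block decomposition of the gradients via relation (\ref{e:eq32}), expansion of $Q^\top\Lambda^{-1}Q$ to obtain the block form of $H_{\hat{\theta}}^{-1}$ with $\mathsf{F}_1,\mathsf{F}_2,\mathsf{F}_3$, and identification of $H_{\hat{\theta}_{\mathscr{C}}}$ with the top-left block $Q_{\mathscr{C}}^\top\Lambda_{\mathscr{C}}Q_{\mathscr{C}}+\mathcal{T}_1$ of $Q^\top\Lambda Q$. You spell out the intermediate multiplications that the paper only states as relations (\ref{e:eq49}) and (\ref{e:eq50}), but the argument is the same.
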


\begin{proof}
From the definition of vectors $\nabla_{\theta}\mathcal{L}(z_t, \hat{\theta})$,
$\nabla_{\theta}\mathcal{L}(z_p, \hat{\theta})$, $g_t$, $g_p$, $f_t$, and $f_p$, and from relation (\ref{e:eq32}) it follows that:

\begin{equation}
\label{e:eq48}
\nabla_{\theta}\mathcal{L}(z_t, \hat{\theta})^\top = (g_t^\top \>\> f_t^\top),
\hspace{6pt}
\nabla_{\theta}\mathcal{L}(z_p, \hat{\theta})^\top = (g_p^\top \>\> f_p^\top)
\end{equation}

\noindent
Next, from relation (\ref{e:eq37}), the inverse Hessian $H_{\hat{\theta}}^{-1}$ can be expressed as:

\begin{equation}
\label{e:eq49}
H_{\hat{\theta}}^{-1}=\begin{pmatrix} Q_{\mathscr{C}}^\top \Lambda_{\mathscr{C}}^{-1} Q_{\mathscr{C}} + \mathsf{F}_1 & \mathsf{F}_2 \> \\ \mathsf{F}_2^\top & \mathsf{F}_3 \> \end{pmatrix}
\end{equation}

\noindent
Also, from relations (\ref{e:eq33}) and (\ref{e:eq37}), it follows that:

\begin{equation}
\label{e:eq50}
H_{\hat{\theta}_{\mathscr{C}}}^{-1} = (Q_{\mathscr{C}}^\top \Lambda_{\mathscr{C}}Q_{\mathscr{C}}\>+ \mathcal{T}_1)^{-1}
\end{equation}

Theorem \ref{t:thm1} follows directly from combining relations (\ref{e:eq26}) and (\ref{e:eq35}) with relations
(\ref{e:eq39}), (\ref{e:eq40}), (\ref{e:eq42}), (\ref{e:eq48}), (\ref{e:eq49}) and (\ref{e:eq50}).

\end{proof}

From Theorem \ref{t:thm1} follows that, if the eigenvalues contained in the matrix $\Lambda_{\mathscr{C}}^{-1}$, i.e., $\{\frac{1}{\lambda_1},\ldots,\frac{1}{\lambda_{\mu_{\mathscr{C}}}}\}$, are the most dominant ones and if the elements of submatrix
$Q_{\mathscr{C}}$ have the highest values, then $\alpha, \beta, \gamma$ and $\Lambda_{\mathscr{F}}^{-1}$ are negligible for some specific set of bounds and norms. In this case, and for a subset of bounds and norms, the additive terms $\mathcal{T}_1$ and $\mathcal{T}_2$ can be negligible too.
This means that the ratio $\displaystyle\frac{I_{loss}^{(\mathscr{C})}(\hat{\theta}_{\mathscr{C}}, \zeta_t, \zeta_p)}{I_{loss}(\hat{\theta, z_t, z_p})}$ is equal to one plus some additive error, which depends on how small $\mathcal{T}_1$ and $\mathcal{T}_2$ are. Furthermore, the elements of $Q_{\mathscr{C}}$ are the eigenvector coordinates associated with the classifier parameters. Hence, under certain assumptions about the choice of classifier parameters, the classification  influence function $I_{loss}^{(\mathscr{C})}(\hat{\theta}_{\mathscr{C}}, \zeta_t, \zeta_p)$ is a good approximation of the influence function $I_{loss}(\hat{\theta}, z_t, z_p)$.

\section{Relative influence}
Even with the approximation of relation (\ref{e:eq35}), computing the Hessian $H_{\hat{\theta}_{\mathscr{C}}}$ may be expensive as the training data sets  $\mathcal{Z}$ or $\mathscr{Z}$ may be of very high cardinality. For example, $\mathcal{Z}$ or $\mathscr{Z}$ may contain billions of training points. In this section we introduce the notion of relative influence functions, which are influence functions computed on Hessian matrices coming from small subsets of the training data sets. We demonstrate that, under certain assumptions about the choice of points, the replacement of the Hessian with one coming from a smaller set of points does not impact the information conveyed by the computed influence values.

\vspace{4pt}
\begin{defn}
\label{d:defn4}
(Hessian Matrix Relative to a Subset of a Training  Data Set). Let $\mathcal{Z} = \{(x_i, y_i), i \in [1,n]\}$ be a training data set, $R(\theta)= \displaystyle \frac {1}{n} \displaystyle \sum_{i = 1}^{n} {\mathcal{L}(z_i, \theta)}$ an empirical risk associated with loss function $\mathcal{L}()$, $\hat{\theta}$ set of model parameter values that minimize this empirical risk, and  $H_{\hat{\theta}} = \displaystyle \frac {1}{n} \displaystyle \sum_{i = 1}^{n} {\nabla_{\theta}^2 \> \mathcal{L}(z_i, \hat{\theta})}$ a Hessian matrix associated with the data set, loss function and optimal parameter values. Let also $\mathcal{Z}^{(m)}=\{(x_{i_j}, y_{i_j}) \in \mathcal{Z}, j \in [1,m], \> i_j \in [1, n] \>\> \forall \> j \in [1, m]\}$ be a subset of $\mathcal{Z}$ of cardinality $m \leq n$.  The Hessian matrix relative to the subset $\mathcal{Z}^{(m)}$ of $\mathcal{Z}$ is defined as the matrix:

\begin{equation}
\label{e:eq51}
H_{\hat{\theta}}^{(m)} = \displaystyle \frac {1}{m} \displaystyle \sum_{j= 1}^{m} {\nabla_{\theta}^2 \> \mathcal{L}((x_{i_j}, y_{i_j}),\hat{\theta})}
\end{equation}
\end{defn}

\noindent
The relative Hessian matrix $H_{\hat{\theta}}^{(m)}$ is computed only on the training points of subset $\mathcal{Z}^{(m)}$ and not on the entire data set $\mathcal{Z}$. In the analysis that follows we assume that every relative Hessian matrix considered is positive definite.

\vspace{4pt}
\begin{defn}
\label{d:defn5} (Relative Influence Functions). Let $\mathcal{Z}$ and $\mathcal{Z}^{(m)}$ be a training data  set and a subset of it, as in Definition \ref{d:defn4}. Let also  $H_{\hat{\theta}}$ and $H_{\hat{\theta}}^{(m)}$ be the Hessian and relative Hessian matrices associated with sets $\mathcal{Z}$ and $\mathcal{Z}^{(m)}$, respectively. A relative influence function $I_{up}^{(m)}(\hat{\theta}, z_p)$ associated with the matrix $H_{\hat{\theta}}^{(m)}$ is defined as:

\begin{equation}
\label{e:eq52}
I_{up}^{(m)}(\hat{\theta}, z_p)=-{H_{\hat{\theta}}^{(m)}}^{-1}\cdot \nabla_{\theta}\mathcal{L}(z_p, \hat{\theta})
\end{equation}

where $z_p = (x_p, y_p)$ is a training point in $\mathcal{Z}$. Similarly, a relative influence function $I_{loss}^{(m)}(\hat{\theta}, z_t, z_p)$ associated with $H_{\hat{\theta}}^{(m)}$ and test point $z_t = (x_t, y_t)$ is  defined as:

\begin{equation}
\label{e:eq53}
I_{loss}^{(m)}(\hat{\theta}, z_t, z_p)=- \nabla_{\theta}\mathcal{L}(z_t, \hat{\theta})^\top \cdot {H_{\hat{\theta}}^{(m)}}^{-1}\cdot \nabla_{\theta}\mathcal{L}(z_p, \hat{\theta})
\end{equation}
\end{defn}

Of particular interest is the problem of determining the relationship between the relative influence function $I_{loss}^{(m)}(\hat{\theta}, z_t, z_p)$ and function $I_{loss}(\hat{\theta}, z_t, z_p)$, when the test point $z_t$ is fixed, $z_p$ can be any training point in $\mathcal{Z}$, and Corollaries \ref{c:cor1} and \ref{c:cor2} hold. Specifically, we consider that $I_{loss}(\hat{\theta}, z_t, z_p)$ is given by expression (\ref{e:eq26}), and that any negligible additive terms in $I_{loss}(\hat{\theta}, z_t, z_p)$, which are omitted in (\ref{e:eq26}), do not impact the correctness of the relations that follow.

\vspace{4pt}
\begin{thm}
\label{t:thm2}
(On Loss Estimate Preserving (LEP) Relative Influence Functions).
Let $\mathcal{Z}$  and $\mathcal{Z}^{(m)}$ be training data sets as in Definition \ref{d:defn5}.  Let $H_{\hat{\theta}}$ and $H_{\hat{\theta}}^{(m)}$ be Hessian and relative  Hessian matrices associated with $\mathcal{Z}$  and $\mathcal{Z}^{(m)}$. Let $I_{loss}(\hat{\theta}, z_t, z_p)$ and $I_{loss}^{(m)}(\hat{\theta}, z_t, z_p)$ be influence  and relative influence functions given by expressions (\ref{e:eq26}) and (\ref{e:eq53}), and $z_t$ a fixed test point, the loss gradient of which is not orthogonal to the loss gradient of any of the training points in $\mathcal{Z}$. Finally, we consider the set $\mathcal{R}$ defined as:

\begin{equation}
\label{e:eq54a}
\mathcal{R} = \{r_p=\displaystyle\frac{I_{loss}(\hat{\theta}, z_t, z_p)}{I_{loss}^{(m)}(\hat{\theta}, z_t, z_p)}, z_p \in \mathcal{Z}\}
\end{equation}

If there exists a value $\delta \in \mathbb{R}$, $ \delta \neq 0$, for which the following two conditions are true:

\begin{itemize}
    \item for every $r \in \mathcal{R}$, the matrix $\mathcal{N}(r, \delta, H_{\hat{\theta}}, H_{\hat{\theta}}^{(m)}, z_t) = \delta\cdot {H_{\hat{\theta}}^{(m)}}G(\delta, z_t)^{-1}\cdot(H_{\hat{\theta}}^{-1}-rH_{\hat{\theta}}^{(m)^{-1}})$ is negligible when added to $r \cdot I$, and when both $\mathcal{N}(r, \delta, H_{\hat{\theta}}, H_{\hat{\theta}}^{(m)}, z_t)$ and $r\cdot I$ are multiplied with $G(\delta, z_t)$ from the left and $\nabla_{\theta}\mathcal{L}(z_{p}, \hat{\theta})$ from the right
    \item for every training point $z_p$ and corresponding ratio $ r_p=\displaystyle\frac{I_{loss}(\hat{\theta}, z_t, z_p)}{I_{loss}^{(m)}(\hat{\theta}, z_t, z_p)}$, every scalar product $\mathsf{p}_{\delta,p}$ computed between $\delta$ and any of the elements of the gradient vector $\nabla_{\theta}\mathcal{L}(z_{p,} \hat{\theta})$ is negligible when  added to scalar  $\mathsf{p}_{t, p} = \nabla_{\theta}\mathcal{L}(z_t, \hat{\theta})^\top \cdot \nabla_{\theta}\mathcal{L}(z_{p}, \hat{\theta})$, and when the entities $\mathsf{p}_{\delta, p}, \mathsf{p}_{t, p}$ are each multiplied with  one of $r_p$  or an element of matrix $G(\delta, z_t) \cdot H_{\hat{\theta}}^{(m)} \cdot H_{\hat{\theta}}^{-1} \cdot G(\delta, z_t)^{-1}$
\end{itemize}

then:

\begin{equation}
\label{e:eq54}
\displaystyle \frac{I_{loss}^{(m)}(\hat{\theta}, z_t, z_p)}{I_{loss}(\hat{\theta}, z_t, z_p)} = \rho^{-1}, \> \forall z_p \in \mathcal{Z}
\end{equation}

for some $\rho \in \mathbb{R}$, where equality (\ref{e:eq54}) omits negligible additive terms, which may depend on the training point $z_p$. The term ``negligible'' is defined in Definition \ref{d:defn1}, $I$ is the $\mu \times \mu$ identity matrix, $\mu$ is the number of parameters in $\hat{\theta}$, and:

\begin{equation}
\label{e:eq55}
G(\delta, z_t)=\begin{pmatrix} \nabla_{\theta}\mathcal{L}(z_t, \hat{\theta})^\top  \\ \vdots \\ \nabla_{\theta}\mathcal{L}(z_t, \hat{\theta})^\top  \end{pmatrix} + \delta \cdot I
\end{equation}

\end{thm}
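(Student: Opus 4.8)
The plan is to relate $I_{loss}^{(m)}(\hat{\theta}, z_t, z_p)$ and $I_{loss}(\hat{\theta}, z_t, z_p)$ not point-by-point but through a single common scalar $\rho$, by engineering a matrix identity that, after the two hypotheses are invoked, collapses the $z_p$-dependence. Write $g_p = \nabla_{\theta}\mathcal{L}(z_p, \hat{\theta})$ and $g_t = \nabla_{\theta}\mathcal{L}(z_t, \hat{\theta})$, so that $I_{loss}(\hat{\theta}, z_t, z_p) = -g_t^\top H_{\hat{\theta}}^{-1} g_p$ and $I_{loss}^{(m)}(\hat{\theta}, z_t, z_p) = -g_t^\top {H_{\hat{\theta}}^{(m)}}^{-1} g_p$ by (\ref{e:eq26}) and (\ref{e:eq53}). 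First I would observe that $G(\delta, z_t)\, v = (g_t^\top v)\,\mathbf{1} + \delta v$ for any vector $v$, where $\mathbf{1}$ is the all-ones vector; in particular $G(\delta, z_t)\, H_{\hat{\theta}}^{-1} g_p = -I_{loss}(\hat{\theta}, z_t, z_p)\,\mathbf{1} + \delta H_{\hat{\theta}}^{-1} g_p$, and likewise with $H_{\hat{\theta}}^{(m)}$ in place of $H_{\hat{\theta}}$. This is the lever by which the scalar influence values get promoted to vector quantities amenable to the ``negligible additive term'' machinery of Definition~\ref{d:defn1}.

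Next I would form the difference $G(\delta,z_t)\big(H_{\hat{\theta}}^{-1} - r_p\, {H_{\hat{\theta}}^{(m)}}^{-1}\big) g_p$ and expand it using the previous identity: the two rank-one pieces produce $-\big(I_{loss}(\hat{\theta},z_t,z_p) - r_p\, I_{loss}^{(m)}(\hat{\theta},z_t,z_p)\big)\mathbf{1}$, which vanishes identically by the very definition $r_p = I_{loss}/I_{loss}^{(m)}$ in (\ref{e:eq54a}), leaving only $\delta\big(H_{\hat{\theta}}^{-1} - r_p {H_{\hat{\theta}}^{(m)}}^{-1}\big) g_p$. The first bulleted hypothesis is precisely the assertion that the matrix $\mathcal{N}(r_p,\delta,H_{\hat{\theta}},H_{\hat{\theta}}^{(m)},z_t) = \delta\, {H_{\hat{\theta}}^{(m)}}\, G(\delta,z_t)^{-1}\,(H_{\hat{\theta}}^{-1} - r_p {H_{\hat{\theta}}^{(m)}}^{-1})$ is negligible against $r_p I$ after left-multiplication by $G(\delta,z_t)$ and right-multiplication by $g_p$; unwinding that statement and cancelling $G(\delta,z_t)$ against $G(\delta,z_t)^{-1}$ gives, up to negligible terms, $\delta\, {H_{\hat{\theta}}^{(m)}}(H_{\hat{\theta}}^{-1} - r_p {H_{\hat{\theta}}^{(m)}}^{-1}) g_p \approx r_p\, G(\delta,z_t) g_p$, i.e. ${H_{\hat{\theta}}^{(m)}} H_{\hat{\theta}}^{-1} g_p - r_p g_p \approx (r_p/\delta) G(\delta,z_t) g_p = (r_p/\delta)\big((g_t^\top g_p)\mathbf{1} + \delta g_p\big)$, so that ${H_{\hat{\theta}}^{(m)}} H_{\hat{\theta}}^{-1} g_p \approx 2 r_p g_p + (r_p/\delta)(g_t^\top g_p)\mathbf{1}$. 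Now I would left-multiply by $g_t^\top {H_{\hat{\theta}}^{(m)}}^{-1}$ to recover influence values: $g_t^\top H_{\hat{\theta}}^{-1} g_p \approx 2 r_p\, g_t^\top {H_{\hat{\theta}}^{(m)}}^{-1} g_p + (r_p/\delta)(g_t^\top g_p)\, g_t^\top {H_{\hat{\theta}}^{(m)}}^{-1}\mathbf{1}$; the second hypothesis, controlling the scalar products $\mathsf{p}_{\delta,p}$ versus $\mathsf{p}_{t,p} = g_t^\top g_p$ when weighted by $r_p$ or by entries of $G(\delta,z_t) H_{\hat{\theta}}^{(m)} H_{\hat{\theta}}^{-1} G(\delta,z_t)^{-1}$, is exactly what is needed to absorb that last cross-term (and the spurious factor discrepancies) into a negligible remainder. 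What survives is $I_{loss}(\hat{\theta},z_t,z_p) \approx \rho\, r_p\, I_{loss}^{(m)}(\hat{\theta},z_t,z_p)$ for a constant $\rho$ independent of $z_p$; since $r_p = I_{loss}/I_{loss}^{(m)}$, consistency forces $\rho r_p$ to be the reciprocal of the claimed constant, and rearranging yields $I_{loss}^{(m)}/I_{loss} = \rho^{-1}$ up to negligible additive terms, which is (\ref{e:eq54}).

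The non-orthogonality hypothesis on $z_t$ enters to guarantee $g_t^\top g_p \neq 0$ and $I_{loss}^{(m)}(\hat{\theta},z_t,z_p)\neq 0$ for every $z_p$, so that the ratios $r_p$ defining $\mathcal{R}$ are well defined and finite and no division by zero occurs when passing between scalar and vector forms. The main obstacle I anticipate is bookkeeping the interaction between the two ``negligible'' hypotheses: each is stated relative to an unspecified $f$, $\mathbb{L}_q$, $\varepsilon$, and input set in the sense of Definition~\ref{d:defn1}, so the delicate point is verifying that the composite error produced by chaining the matrix-level negligibility (first bullet) through a left-multiplication by $g_t^\top {H_{\hat{\theta}}^{(m)}}^{-1}$ and then combining with the scalar-level negligibility (second bullet) is itself negligible for a consistent choice of bound and norm — the algebraic cancellations are forced, but tracking that negligibility is preserved under these linear operations, rather than amplified, is where the real care is required. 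I would handle this by fixing the operator norm throughout, noting that multiplication by the fixed vectors $g_t$ and fixed matrices merely rescales $\varepsilon$ by a bounded factor, and reading the two hypotheses as having been stated for norms and bounds already chosen to make this rescaling harmless.
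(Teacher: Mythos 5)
Your setup matches the paper's: you correctly exploit the rank\mbox{-}one\mbox{-}plus\mbox{-}$\delta I$ structure of $G(\delta,z_t)$, and your observation that the rank-one parts of $G(\delta,z_t)\,(H_{\hat{\theta}}^{-1}-r_p{H_{\hat{\theta}}^{(m)}}^{-1})\,g_p$ cancel by the very definition of $r_p$ is exactly the paper's passage from (\ref{e:eq57}) to (\ref{e:eq58}). But your use of the first hypothesis goes wrong. That hypothesis says $G(\delta,z_t)\,\mathcal{N}\,g_p$ is negligible \emph{when added to} $r_p\,G(\delta,z_t)\,g_p$; its correct use is to turn the identity $(G H_{\hat{\theta}}^{(m)}H_{\hat{\theta}}^{-1}G^{-1})\,Gg_p = G(r_pI+\mathcal{N})g_p$ of (\ref{e:eq59}) into $(G H_{\hat{\theta}}^{(m)}H_{\hat{\theta}}^{-1}G^{-1})\,Gg_p = r_p\,Gg_p$, i.e., $Gg_p$ is, up to negligible terms, an eigenvector of the fixed matrix $W=G H_{\hat{\theta}}^{(m)}H_{\hat{\theta}}^{-1}G^{-1}$ with eigenvalue $r_p$. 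You instead ``cancel $G$ against $G^{-1}$'' (not legitimate, since $G H_{\hat{\theta}}^{(m)}G^{-1}\neq H_{\hat{\theta}}^{(m)}$) and read the negligible quantity as being approximately \emph{equal to} $r_pGg_p$ rather than negligible relative to it; this is what produces the spurious factor of $2$ in your relation $H_{\hat{\theta}}^{(m)}H_{\hat{\theta}}^{-1}g_p\approx 2r_pg_p+\cdots$, which is already inconsistent with the exact identity $I_{loss}=r_p I_{loss}^{(m)}$.

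More importantly, the heart of the theorem --- that $r_p$ does not depend on $z_p$ --- is never actually established in your write-up; it is asserted in the sentence ``What survives is $I_{loss}\approx\rho\, r_p\, I_{loss}^{(m)}$ for a constant $\rho$ independent of $z_p$,'' and the subsequent ``consistency'' step is circular (since $I_{loss}=r_pI_{loss}^{(m)}$ holds identically, such a relation only forces $\rho\approx 1$ and says nothing about the $r_p$'s agreeing). The paper's mechanism is concrete: by the second hypothesis the vector $Gg_p=(\mathsf{p}+\delta y_1,\ldots,\mathsf{p}+\delta y_\mu)^\top$ collapses to $\mathsf{p}\cdot(1,\ldots,1)^\top$, so the eigenvector relation becomes the system (\ref{e:eq63}); dividing by $\mathsf{p}\neq 0$ (this, not the nonvanishing of $I_{loss}^{(m)}$, is where the non-orthogonality hypothesis is used) yields $r_p=\sum_j w_{ij}$ for every row $i$. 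Since the row sums of $W$ involve only $z_t$, $\delta$, and the two Hessians --- never $z_p$ --- all the ratios $r_p$ coincide with a single $\rho$. That identification of $r_p$ with a manifestly $z_p$-independent quantity is the missing idea in your proposal.
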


Since $\delta \neq 0$, matrix $G(\delta, z_t)$ is invertible. Theorem \ref{t:thm2} suggests that for specific $\mathcal{Z}$, $H_{\hat{\theta}}$, and test point $z_t$, there exists a class of subsets $\mathcal{Z}^{(m)}$ and associated relative Hessian matrices $H_{\hat{\theta}}^{(m)}$, which impact the loss change computations on every training point of $\mathcal{Z}$ in a uniform manner, if these computations are performed using the relative influence function $I_{loss}^{(m)}()$ and not $I_{loss}()$. Furthermore, the search for a specific subset $\mathcal{Z}^{(m)}$ and matrix $H_{\hat{\theta}}^{(m)}$ of this class could be realized with reasonable complexity. We discuss this issue more below. We refer to the relative influence functions of this class as ``Loss Estimate Preserving'' (LEP) relative influence functions.

\begin{proof}
We assume that the hypothesis holds. We will show that relation (\ref{e:eq54}) is true. Let $z_p$ be a specific training point for which:

\begin{equation}
\label{e:eq56}
\displaystyle \frac{I_{loss}^{(m)}(\hat{\theta}, z_t, z_{p})}{I_{loss}(\hat{\theta}, z_t, z_{p})} = r^{-1}
\end{equation}

for some specific ratio $r = r(z_p) \in \mathcal{R}$ that depends on $z_p$ and fixed test point $z_t$ from the hypothesis. We also consider the matrix
$G^{*}(z_t) = G(\delta, z_t) - \delta \cdot I$ and $g$ the gradient vector $\nabla_{\theta}\mathcal{L}(z_{p}, \hat{\theta})$. Relation (\ref{e:eq56}) can be written as:

\begin{equation}
\label{e:eq57}
\nabla_{\theta}\mathcal{L}(z_t, \hat{\theta})  H_{\hat{\theta}}^{-1}  g = r  \nabla_{\theta}\mathcal{L}(z_t, \hat{\theta})  H_{\hat{\theta}}^{(m)^{-1}}  g \Rightarrow G^{*}(z_t) H_{\hat{\theta}}^{-1}  g = r G^{*}(z_t)  H_{\hat{\theta}}^{(m)^{-1}}  g
\end{equation}

Adding the term $t_1 = \delta H_{\hat{\theta}}^{-1}  g + r  \delta H_{\hat{\theta}}^{(m)^{-1}}g$
to both sides of (\ref{e:eq57}) we obtain:

\begin{equation}
\label{e:eq58}
G(\delta, z_t) H_{\hat{\theta}}^{-1}  g = r  G(\delta, z_t) H_{\hat{\theta}}^{(m)^{-1}}  g + \delta (H_{\hat{\theta}}^{-1} g - r  H_{\hat{\theta}}^{(m)^{-1}} g)
\end{equation}

Next we multiply both sides of (\ref{e:eq58}) with the factor
$f = G(\delta, z_t) \cdot H_{\hat{\theta}}^{(m)}\cdot G(\delta, z_t)^{-1}$. This results in:

\begin{equation}
\label{e:eq59}
(G(\delta, z_t)  H_{\hat{\theta}}^{(m)} H_{\hat{\theta}}^{-1}G(\delta, z_t)^{-1}) \cdot G(\delta, z_t)g = G( \delta, z_t)\cdot (r I + \mathcal{N}(r, \delta, H_{\hat{\theta}}, H_{\hat{\theta}}^{(m)}, z_t)) \cdot g
\end{equation}

According to the first condition of the hypothesis, the term $t_2 = \mathcal{N}(r, \delta, H_{\hat{\theta}}, H_{\hat{\theta}}^{(m)}, z_t)$  is negligible when added to $r \cdot I$. We notice that both terms are multiplied from the left and from the right with the factors mentioned in the first condition of the hypothesis. We also follow the convention of omitting negligible terms in equality relations. Hence:

\begin{equation}
\label{e:eq60}
(G(\delta, z_t)  H_{\hat{\theta}}^{(m)} H_{\hat{\theta}}^{-1}G(\delta, z_t)^{-1}) \cdot G(\delta, z_t)g = r \cdot G( \delta, z_t) g
\end{equation}

We proceed with the proof denoting the elements of the $\mu \times \mu$ matrix $G(\delta, z_t) \cdot H_{\hat{\theta}}^{(m)}H_{\hat{\theta}}^{-1}\cdot G(\delta, z_t)^{-1}$ by $\{ w_{ij}, i \in [i, \mu], j \in [1, \mu]\}$, the dot product $\nabla_{\theta}\mathcal{L}(z_t, \hat{\theta})^\top \cdot g$ by $\mathsf{p}$ and the elements of vector $g$ by $\{y_i, i \in [1, \mu]\}$. Equality (\ref{e:eq60}) is thus written as:

\begin{equation}
\label{e:eq61}
\begin{pmatrix}  w_{11}& \cdots &  w_{1\mu}\\ \vdots& \ddots & \vdots \\ w_{\mu 1} & \cdots & w_{\mu \mu} \end{pmatrix} \begin{pmatrix}  \mathsf{p} + \delta y_1 \\  \vdots \\ \mathsf{p} + \delta y_{\mu} \end{pmatrix} =r \cdot  \begin{pmatrix}  \mathsf{p} + \delta y_1 \\  \vdots \\ \mathsf{p} + \delta y_{\mu} \end{pmatrix}
\end{equation}

According to the second condition of the hypothesis the terms $\delta y_i, i \in [1, \mu]$ are negligible when added to $\mathsf{p}$. We notice that terms $\delta y_i, i \in [1, \mu]$ and $\mathsf{p}$ are multiplied with the factors mentioned in the hypothesis. Hence:

\begin{equation}
\label{e:eq62}
\begin{pmatrix}  w_{11}& \cdots &  w_{1\mu}\\ \vdots& \ddots & \vdots \\ w_{\mu 1} & \cdots & w_{\mu \mu} \end{pmatrix} \begin{pmatrix}  \mathsf{p}  \\ \vdots \\ \mathsf{p} \end{pmatrix} =r \cdot  \begin{pmatrix}  \mathsf{p} \\  \vdots \\ \mathsf{p} \end{pmatrix}
\end{equation}

Equating the corresponding elements of the vectors of the left and right hand sides of (\ref{e:eq62}), we obtain:

\begin{equation}
\label{e:eq63}
(w_{11} + \ldots+w_{1 \mu}) \cdot\mathsf{p} = r \cdot \mathsf{p}, \ldots, (w_{\mu 1} + \ldots+w_{\mu \mu}) \cdot\mathsf{p} = r \cdot \mathsf{p}
\end{equation}

Scalar $\mathsf{p}$ is nonzero, according to the assumption that the loss gradient in $z_t$ is not orthogonal to the loss gradient in any training point in $\mathcal{Z}$. Hence, scalar $\mathsf{p}$ can be eliminated from the equations of (\ref{e:eq63}):

\begin{equation}
\label{e:eq64}
r = w_{11} + \ldots+w_{1 \mu}, \ldots, r = w_{\mu 1} + \ldots+w_{\mu \mu}
\end{equation}

The equations of (\ref{e:eq64}) indicate that the sum of the elements of the rows of matrix $G(\delta, z_t) \cdot H_{\hat{\theta}}^{(m)}H_{\hat{\theta}}^{-1}\cdot G(\delta, z_t)^{-1}$ is the same for all rows, with the exception of negligible additive terms, and that the value or $r$ is independent of the training point $z_p$. Denoting the sum of the elements of a row of $G(\delta, z_t) \cdot H_{\hat{\theta}}^{(m)}H_{\hat{\theta}}^{-1}\cdot G(\delta, z_t)^{-1}$ by $\rho$, it must hold that $r = \rho$ and Theorem \ref{t:thm2} is proven.

\end{proof}

Since  the conditions of the hypothesis concern only gradient vectors from the finite set $\mathcal{Z}$ and not vectors in any infinite subset of $\mathbb{R}^{\mu}$, we conjecture that the set  of matrices $H_{\hat{\theta}}^{(m)}$ which satisfy these conditions has non-negligible cardinality and members of the set can be identified by executing sequences of feasible computation steps. These computations are outlined below. These differ from identifying trivial sets that satisfy Theorem \ref{t:thm2}, like for instance when matrix $H_{\hat{\theta}}^{(m)}$ has one eigenvector in common with $H_{\hat{\theta}}$. This matrix satisfies Theorem \ref{t:thm2} when all gradient vectors coming from the training points of $\mathcal{Z}$ are parallel to this common eigenvector.

\vspace{4pt}
In what follows we informally discuss a sequence of steps that can be used for identifying LEP relative influence functions and their associated relative Hessian matrices. These steps do not require necessarily computing the exact Hessian $H_{\hat{\theta}}$. First, the analytical expressions for the loss function $\mathcal{L}()$, as well as domain knowledge about the coordinates of the points in $\mathcal{Z}$, is taken into account in order to obtain properties for the matrices $\nabla_{\theta}^2\mathcal{L}(z_p, \hat{\theta}), \> z_p \in \mathcal{Z}$.  Properties may include  upper and lower bounds, or distribution parameters for the elements of matrices $\nabla_{\theta}^2\mathcal{L}(z_p, \hat{\theta})$. From these properties, coarser approximations for the elements of the \vspace{-1pt} Hessian $H_{\hat{\theta}}$ and its inverse are obtained, as well as for the values in the set $\mathcal{R}$. Next, the \vspace{-2pt}space of alternative pairs $(\delta, H_{\hat{\theta}}^{(m)})$ is searched, for example by means of randomly sampling. If a value $\delta$
\vspace{-2pt}and a matrix $H_{\hat{\theta}}^{(m)}$ are found such that the matrix $t_3 = \delta\cdot {H_{\hat{\theta}}^{(m)}}G(\delta, z_t)^{-1}$,
when multiplied with an \vspace{-2pt} approximation of $t_4 = r^{-1} (H_{\hat{\theta}}^{-1}-rH_{\hat{\theta}}^{(m)^{-1}})$, produces a negligible additive matrix, then $H_{\hat{\theta}}^{(m)}$ must be an LEP relative Hessian \vspace{-2pt}matrix, with very high probability, and its associated $I_{loss}^{(m)}()$ an LEP relative influence function.

\vspace{4pt}
The procedure could be performed once to determine the
range of values for the cardinality of the set
$\mathcal{Z}^{(m)}$ resulting in LEP relative Hessian matrices. Once the expected cardinality of the set $\mathcal{Z}^{(m)}$ is determined, we could compute relative matrices $H_{\hat{\theta}}^{(m)}$ using lighter methods such as by directly sampling as many points as the expected cardinality for $\mathcal{Z}^{(m)}$ twice, computing a pair of relative Hessian
matrices, and observing uniform impact on the influence function  between the two matrices.

\begin{figure}
  \centering
  \includegraphics[scale=0.37]{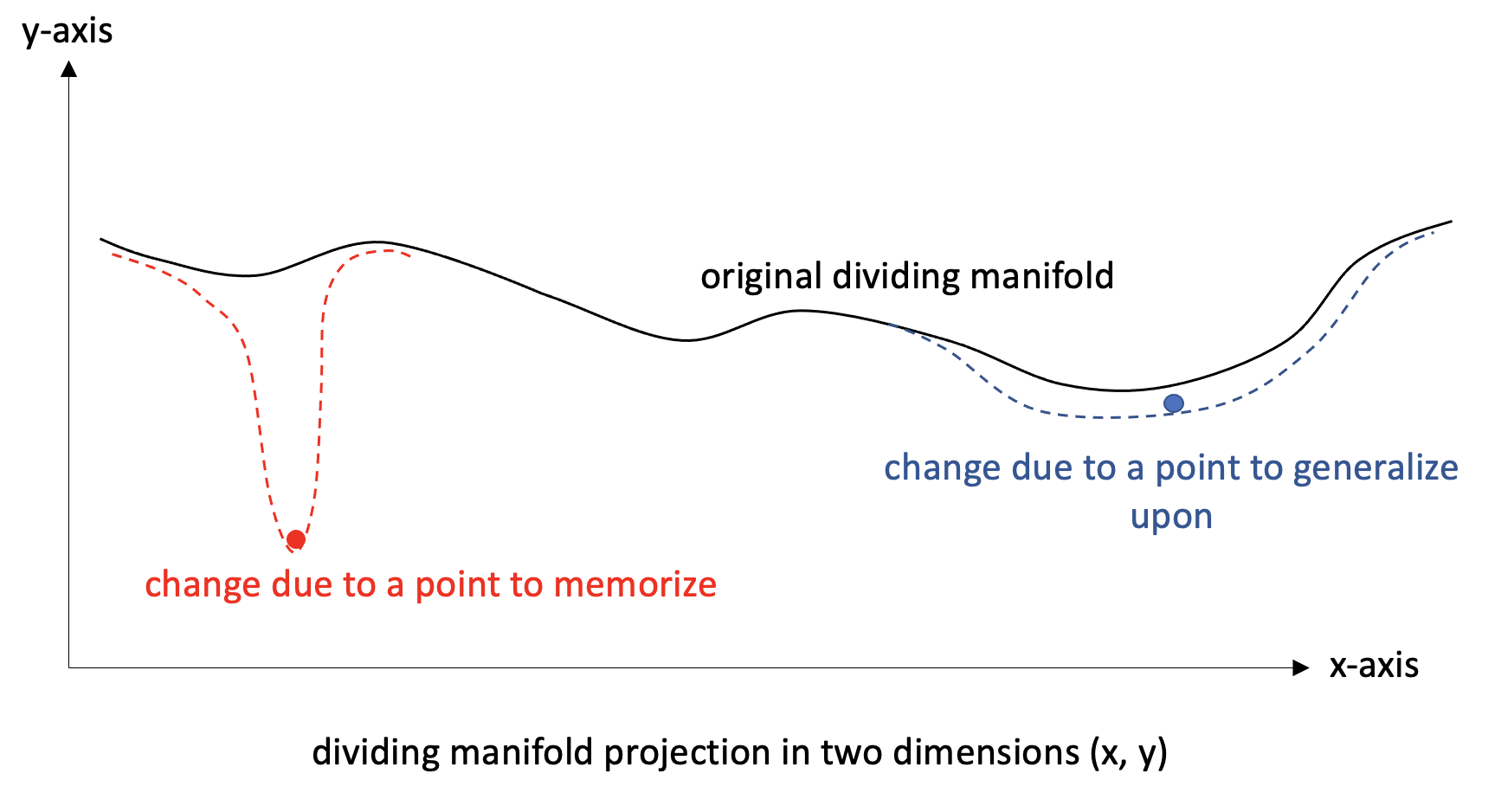}
  \caption{Changes to the class dividing manifold caused by a training point to memorize, and a training point to generalize upon}
\label{p:pic2}
\end{figure}

\section{Memorization and generalization}

One interesting distinction concerning training points is \vspace{2pt} the distinction between points to memorize and points to generalize upon. Memorizable \vspace{2pt}  training points are typically those which are not part of any dense spatial cluster. Memorizable points are isolated \vspace{2pt}points, located at sparse areas of the multi-dimensional space in which training points are \vspace{2pt}represented. Memorizing such points typically results in significant changes in the neural network model parameters \vspace{2pt}and class dividing manifold. One the other hand, points to generalize upon are found in dense spatial \vspace{2pt}clusters. Furthermore, the addition of a generalizable point in the training data \vspace{2pt}set results in modest changes in the model parameters and dividing manifold. In what follows we formally \vspace{2pt}define what a memorizable and a generalizable point is.

\vspace{4pt}
\begin{defn}
\label{d:defn6}
($D$-memorizable training point). Let $\mathcal{Z}$ be a training data set as in relation (\ref{e:eq1}), $z_p \in \mathcal{Z}$ a training point in this data set, $\hat{\theta}$ a set of optimal model parameters given by relation (\ref{e:eq3}), and $H_{\hat{\theta}}$ a positive definite Hessian matrix associated with $\hat{\theta}$ and loss function $\mathcal{L}()$ as in relation (\ref{e:eq4}). Let $D \in \mathbb{R}$  be a threshold value. Training point $z_p$ is called $D$-memorizable if and only if:

\begin{equation}
\label{e:eq65}
\| H_{\hat{\theta}}^{-1} \nabla_{\theta}\mathcal{L} (z_p, \hat{\theta})\|_1 > D
\end{equation}

where $\|\cdot\|_1$ is the $\mathbb{L}_1$ norm and vectors $\nabla_{\theta}\mathcal{L} (z_p, \hat{\theta})$, $H_{\hat{\theta}}^{-1} \nabla_{\theta}\mathcal{L} (z_p, \hat{\theta})$ are represented in the space defined by the orthonormal eigenbasis of $H_{\hat{\theta}}$. This is also the orthonormal eigenbasis of $H_{\hat{\theta}}^{-1}$.
\end{defn}

\vspace{4pt}
\begin{defn}
\label{d:defn7}
($D$-generalizable training point).
Let $\mathcal{Z}$ be a training data set, $z_p \in \mathcal{Z}$ a training point in this data set, $\hat{\theta}$, $H_{\hat{\theta}}$ and  $D \in \mathbb{R}$ as in Definition \ref{d:defn6}. Training point $z_p$ is called $D$-generalizable if and only if:

\begin{equation}
\label{e:eq66}
\| H_{\hat{\theta}}^{-1} \nabla_{\theta}\mathcal{L} (z_p, \hat{\theta})\|_1 \leq D
\end{equation}

where again $\|\cdot\|_1$ is the $\mathbb{L}_1$ norm and the vectors $\nabla_{\theta}\mathcal{L} (z_p, \hat{\theta})$, $H_{\hat{\theta}}^{-1} \nabla_{\theta}\mathcal{L} (z_p, \hat{\theta})$ are represented in the space defined by the orthonormal eigenbasis of $H_{\hat{\theta}}$.
\end{defn}

In the next Theorem, we show that the sign of the influence function $I_{loss}(\hat{\theta}, z_t, z_p)$ given by (\ref{e:eq26}) indicates whether a training point is memorizable, for some test point $z_t$.

\vspace{4pt}
\begin{thm}
\label{t:thm3}
(On a training  point being memorizable).
Let $\mathcal{Z}$ be a training data set, $z_p \in \mathcal{Z}$ a training point in this data set, $\hat{\theta}$ and $z_p \in \mathcal{Z}$ as in Definition \ref{d:defn6}, $z_t$ some test point with non-zero gradient $\nabla_{\theta}\mathcal{L} (z_t, \hat{\theta})$, and $I_{loss}(\hat{\theta}, z_t, z_p)$ an influence function computed on $H_{\hat{\theta}}, z_t, z_p$, and given by relation (\ref{e:eq26}). Let $\{\chi_1,\ldots,\chi_{\mu}\}$ and $\{\psi_1,\ldots,\psi_{\mu}\}$ be the representations of the vectors $\nabla_{\theta}\mathcal{L} (z_p, \hat{\theta})$ and $\nabla_{\theta}\mathcal{L} (z_t, \hat{\theta})$ in the space defined by the orthonormal eigenbasis of $H_{\hat{\theta}}$, respectively, and $\{\lambda_1,\ldots,\lambda_{\mu}\}$ the corresponding eigenvalues of $H_{\hat{\theta}}^{-1}$, which are positive values. Let also $\lambda_{min}$ be the minimum of $\lambda_1,\ldots,\lambda_{\mu}$, and  $T^{+}(z_p, z_t)$ and $T^{-}(z_p, z_t)$ threshold functions defined as:

\begin{equation}
\label{e:eq67}
\begin{array} {c}
T^{+}(z_p, z_t) = \displaystyle\sum_{i = 1}^{\mu} \chi_i \psi_i 1(\chi_i\psi_i \geq0)
\vspace{4pt}\\
T^{-}(z_p, z_t)=-\displaystyle\sum_{i=1}^{\mu}\chi_i\psi_i1(\chi_i\psi_i < 0)
\end{array}
\end{equation}

where $1(\textsf{expression})=1$ if $\textsf{expression} = \textsf{true}$  or $0$ otherwise. Finally let $\mathcal{T}^{+}$ and $\mathcal{T}^{-}$ be defined as:

\begin{equation}
\label{e:eq68}
\begin{array} {c}
\mathcal{T}^{+} = \inf_{t \in \{T^{+}(z_p, z_t), z_p \in \mathcal{Z}\}}\{t\}
\vspace{4pt}\\
\mathcal{T}^{-} = \inf_{t \in \{T^{-}(z_p, z_t), z_p \in \mathcal{Z}\}}\{t\}
\end{array}
\end{equation}

We also assume that Corollaries \ref{c:cor1} and \ref{c:cor2} hold.  Then the following statements are true:

\begin{itemize}
    \item If $I_{loss}(\hat{\theta}, z_t, z_p) > 0$ then training point $z_p$ is
    $D^{+}$-memorizable, where $D^{+} = \displaystyle\frac{\mathcal{T}^{+}\lambda_{min}}{\max_{\psi\in\{|\psi_1|,\ldots,|\psi_{\mu}|\}}\psi}$
    \item If $I_{loss}(\hat{\theta}, z_t, z_p) < 0$ then training point $z_p$ is
    $D^{-}$-memorizable, where $D^{-} = \displaystyle\frac{\mathcal{T}^{-}\lambda_{min}}{\max_{\psi\in\{|\psi_1|,\ldots,|\psi_{\mu}|\}}\psi}$
\end{itemize}
\end{thm}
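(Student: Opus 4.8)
The plan is to diagonalize everything in the orthonormal eigenbasis of $H_{\hat{\theta}}$, which is the basis in which, by hypothesis, $\nabla_{\theta}\mathcal{L}(z_p,\hat{\theta})$ and $\nabla_{\theta}\mathcal{L}(z_t,\hat{\theta})$ are written as $(\chi_1,\ldots,\chi_{\mu})^\top$ and $(\psi_1,\ldots,\psi_{\mu})^\top$, and in which $H_{\hat{\theta}}^{-1}$ acts as the diagonal matrix with the (positive) entries $\lambda_1,\ldots,\lambda_{\mu}$. In these coordinates the expression (\ref{e:eq26}) for $I_{loss}(\hat{\theta},z_t,z_p)$ reads $-\sum_{i=1}^{\mu}\lambda_i\chi_i\psi_i$, while the quantity appearing in Definitions \ref{d:defn6} and \ref{d:defn7} reads $\|H_{\hat{\theta}}^{-1}\nabla_{\theta}\mathcal{L}(z_p,\hat{\theta})\|_1=\sum_{i=1}^{\mu}\lambda_i|\chi_i|$, since all $\lambda_i>0$. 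The first step is then the crude but sufficient lower bound $\sum_{i=1}^{\mu}\lambda_i|\chi_i|\ge\lambda_{min}\sum_{i=1}^{\mu}|\chi_i|$.

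Next I would relate $\sum_{i}|\chi_i|$ to the threshold functions $T^{+}$ and $T^{-}$. Writing $\psi_{max}=\max_{i}|\psi_i|$, which is strictly positive because $\nabla_{\theta}\mathcal{L}(z_t,\hat{\theta})\ne 0$, one has $\psi_{max}\sum_{i}|\chi_i|\ge\sum_{i}|\chi_i||\psi_i|=\sum_{i}|\chi_i\psi_i|=T^{+}(z_p,z_t)+T^{-}(z_p,z_t)$, the last equality being just the split of $\sum_{i}|\chi_i\psi_i|$ into its nonnegative-product and negative-product parts. Hence $\|H_{\hat{\theta}}^{-1}\nabla_{\theta}\mathcal{L}(z_p,\hat{\theta})\|_1\ge\frac{\lambda_{min}}{\psi_{max}}\bigl(T^{+}(z_p,z_t)+T^{-}(z_p,z_t)\bigr)$. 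For the case $I_{loss}(\hat{\theta},z_t,z_p)>0$, I would observe that $\sum_{i}\lambda_i\chi_i\psi_i<0$ forces at least one index with $\chi_i\psi_i<0$, so $T^{-}(z_p,z_t)>0$; combining this with $T^{+}(z_p,z_t)\ge\mathcal{T}^{+}$ gives the strict bound $\|H_{\hat{\theta}}^{-1}\nabla_{\theta}\mathcal{L}(z_p,\hat{\theta})\|_1>\frac{\lambda_{min}\mathcal{T}^{+}}{\psi_{max}}=D^{+}$, i.e.\ $z_p$ is $D^{+}$-memorizable. The case $I_{loss}(\hat{\theta},z_t,z_p)<0$ is the mirror image: $\sum_{i}\lambda_i\chi_i\psi_i>0$ forces some $\chi_i\psi_i>0$, so $T^{+}(z_p,z_t)>0$, and combining with $T^{-}(z_p,z_t)\ge\mathcal{T}^{-}$ yields $\|H_{\hat{\theta}}^{-1}\nabla_{\theta}\mathcal{L}(z_p,\hat{\theta})\|_1>D^{-}$.

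There is no technically deep step here; the argument is a change of basis plus the chain of inequalities above. The points that need care are: (i) that the sign hypothesis on $I_{loss}$ is precisely what upgrades the weak bound $\sum_{i}|\chi_i\psi_i|\ge\mathcal{T}^{\pm}$ to the strict inequality demanded by Definition \ref{d:defn6}, via the nonvanishing of $T^{-}$ (respectively $T^{+}$); (ii) that the $\mathbb{L}_1$ norm in Definitions \ref{d:defn6}--\ref{d:defn7} and the bilinear form in (\ref{e:eq26}) are both to be evaluated in the orthonormal eigenbasis, so that $H_{\hat{\theta}}^{-1}$ is literally diagonal and the two quantities collapse to the stated sums; and (iii) that $\psi_{max}>0$ so the thresholds $D^{\pm}$ are well defined, together with the standing assumption that Corollaries \ref{c:cor1} and \ref{c:cor2} hold so that (\ref{e:eq26}) is the correct formula for $I_{loss}$. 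If anything here counts as an obstacle it is only the bookkeeping of which of $T^{+}$, $T^{-}$ is forced positive in each case; everything else is immediate.
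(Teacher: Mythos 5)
Your proof is correct and follows essentially the same route as the paper's: both work in the orthonormal eigenbasis of $H_{\hat{\theta}}$, identify $I_{loss}(\hat{\theta},z_t,z_p)$ with $-\sum_{i}\lambda_i\chi_i\psi_i$ and the memorization quantity with $\sum_{i}\lambda_i|\chi_i|$, and reach the bound $\sum_{i}\lambda_i|\chi_i|>\lambda_{min}\,\mathcal{T}^{\pm}/\max_{i}|\psi_i|$ through the same two factors $\lambda_{min}$ and $\max_{i}|\psi_i|$. The only (immaterial) divergence is where the strict inequality enters: the paper extracts it from the weighted comparison $\sum_{i}\lambda_i|\chi_i\psi_i|1(\chi_i\psi_i<0)>\sum_{i}\lambda_i\chi_i\psi_i1(\chi_i\psi_i\geq 0)$ implied by the sign of $I_{loss}$, whereas you use only the weaker consequence that some cross-term $\chi_i\psi_i$ has the relevant sign, so that $T^{-}>0$ (resp.\ $T^{+}>0$) and hence $T^{+}+T^{-}>\mathcal{T}^{\pm}$ --- both chains are valid and land on the same thresholds $D^{\pm}$.
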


\begin{proof}
We first show that the first of the two statements is true. Let $Q = (q_1  \ldots q_{\mu})$ be the orthonormal eigenbasis of  $H_{\hat{\theta}}$. Then it holds that:

\begin{equation}
\label{e:eq69}
\begin{array} {c}
I_{loss}(\hat{\theta}, z_t, z_p) > 0 \Rightarrow (\psi_1q_1^\top+\ldots+ \psi_{\mu}q_{\mu}^\top)\cdot(\lambda_1\chi_1q_1+\ldots+\lambda_{\mu}\chi_{\mu}q_{\mu})<0 \Rightarrow
\vspace{4pt}\\
\displaystyle\sum_{i=1}^{\mu}\lambda_i \chi_i\psi_i <0 \Rightarrow \displaystyle\sum_{i=1}^{\mu}\lambda_i|\chi_i\psi_i|1(\chi_i\psi_i < 0) > \displaystyle\sum_{i=1}^{\mu}\lambda_i\chi_i\psi_i1(\chi_i\psi_i \geq0)
\end{array}
\end{equation}

Relation (\ref{e:eq69}) can be further written as:

\begin{equation}
\label{e:eq70}
\begin{array} {c}
\displaystyle\sum_{i=1}^{\mu}\lambda_i|\chi_i|
 \cdot {\max_{\psi\in\{|\psi_1|,\ldots,|\psi_{\mu}|\}}\psi} \geq \displaystyle\sum_{i=1}^{\mu} \displaystyle\lambda_i|\chi_i\psi_i|1(\chi_i\psi_i < 0) > \displaystyle\sum_{i=1}^{\mu}\lambda_i\chi_i\psi_i1(\chi_i\psi_i \geq0) \Rightarrow
\vspace{4pt}\\
\displaystyle\sum_{i=1}^{\mu}\lambda_i|\chi_i| > \displaystyle\frac{ \displaystyle\sum_{i=1}^{\mu}\lambda_i\chi_i\psi_i1(\chi_i\psi_i \geq0)}{\max_{\psi\in\{|\psi_1|,\ldots,|\psi_{\mu}|\}}\psi} \Rightarrow \displaystyle\sum_{i=1}^{\mu}\lambda_i|\chi_i| > \displaystyle\frac{ \mathcal{T}^{+}\lambda_{min}}{\max_{\psi\in\{|\psi_1|,\ldots,|\psi_{\mu}|\}}\psi}
\end{array}
\end{equation}

The first statement of Theorem \ref{t:thm3} follows directly from relation (\ref{e:eq70}) and from the observation that:

\begin{equation}
\label{e:eq71}
\| H_{\hat{\theta}}^{-1} \nabla_{\theta}\mathcal{L} (z_p, \hat{\theta})\|_1 = \displaystyle\sum_{i=1}^{\mu}|\lambda_i||\chi_i| =  \displaystyle\sum_{i=1}^{\mu}\lambda_i|\chi_i|
\end{equation}

The second statement of Theorem \ref{t:thm3} is proven in a similar manner:

\begin{equation}
\label{e:eq71b}
\begin{array} {c}
I_{loss}(\hat{\theta, z_t, z_p}) < 0 \Rightarrow \displaystyle\sum_{i=1}^{\mu}\lambda_i \chi_i\psi_i1(\chi_i\psi_i \geq 0) > \displaystyle\sum_{i=1}^{\mu}\lambda_i|\chi_i\psi_i|1(\chi_i\psi_i < 0) \Rightarrow
\vspace{4pt}\\
\displaystyle\sum_{i=1}^{\mu}\lambda_i|\chi_i| > \displaystyle\frac{ -\displaystyle\sum_{i=1}^{\mu}\lambda_i\chi_i\psi_i1(\chi_i\psi_i  < 0)}{\max_{\psi\in\{|\psi_1|,\ldots,|\psi_{\mu}|\}}\psi} \Rightarrow \displaystyle\sum_{i=1}^{\mu}\lambda_i|\chi_i| > \displaystyle\frac{ \mathcal{T}^{-}\lambda_{min}}{\max_{\psi\in\{|\psi_1|,\ldots,|\psi_{\mu}|\}}\psi}
\end{array}
\end{equation}

This, together with relation (\ref{e:eq71}) completes the proof of the second statement of Theorem \ref{t:thm3}.

\end{proof}

\section{On the correlation between the influence value sign  and the label}

The sign of the influence function not only indicates whether a point is generalizable of memorizable but also whether a training point is beneficial or harmful to a specific test point. One should expect that, given a test data set, some memorizable points may increase the loss experienced  at some test points. This means that memorizing certain training points may come at the cost of reducing the accuracy of predictions in some points of the test data set. In what follows we study this behavior in the context of system consisting of a featurizer and a classifier where the classifier consists of a single linear layer and the loss function is the binary cross entropy coupled with the sigmoid transformation.

\vspace{4pt}
\begin{defn}
\label{d:defn8}
(Single linear layer classifier).
Let $\mathscr{F}()$ and $\mathscr{C}()$ be a featurizer and a classifier as in Figure \ref{p:pic1}, where the response of the combined featurizer-classifier system is given by equation (\ref{e:eq28}) and the system parameters satisfy equations (\ref{e:eq29}) and (\ref{e:eq30}). Classifier is called single linear layer classifier if its response $\mathscr{C}(u)$ is given by:

\begin{equation}
\label{e:eq72}
\mathscr{C}(u)=\mathscr{C}((u_0 ...u_{\mathsf{D}-1}))=\mathsf{sigm}\big(\displaystyle\sum_{i=0}^{\mathsf{D}-1}w_i u_i + b\big)
\end{equation}

where $\mathsf{sigm}(x)\leftarrow\displaystyle\frac{1}{1+ e^{-x}}$ is the sigmoid function.
\end{defn}

\vspace{4pt}
Relation (\ref{e:eq72}) can also be written as:

\begin{equation}
\label{e:eq73}
\mathscr{C}(u)= \displaystyle\frac{1}{1 + e^{-f(u)}}, \> f(u)=\displaystyle\sum_{i=0}^{\mathsf{D}-1} w_i u_i + b
\end{equation}

The set of parameters $\theta_{\mathscr{C}}$ of this classifier consists of a weight vector $(w_0...w_{\mathsf{D}-1})$ of size $\mathsf{D}$ and a bias $b$:

\begin{equation}
\label{e:eq74}
\theta_{\mathscr{C}}=\{w_0, w_1,...,w_{\mathsf{D}-1}, b\}, \> \text{and} \>
\hat{\theta}_{\mathscr{C}}=\{\hat{w}_0, \hat{w}_1,...,\hat{w}_{\mathsf{D}-1}, \hat{b}\}
\end{equation}

The analysis that follows applies to the binary cross entropy loss function. Let\rq s consider a training or test point
$\zeta \leftarrow \{ (u_0...u_{\mathsf{D}-1}), y\}$. The binary cross entropy loss function $\mathcal{L}(\zeta, \theta_{\mathscr{C}})$, associated with single linear layer parameters $\theta_{\mathscr{C}}$, which accepts input $\zeta$ is defined as:

\begin{equation}
\label{e:eq75}
\mathcal{L}(\zeta, \theta_{\mathscr{C}})= -\big(\>y\cdot \mathsf{ln}(\mathscr{C}((u_0...u_{\mathsf{D}-1}))) \>+\> (1-y)\cdot\mathsf{ln}(1-\mathscr{C}((u_0...u_{\mathsf{D}-1}))) \>\big)
\end{equation}

where $\mathsf{ln}()$ is the natural logarithm. Before stating the main theorem that establishes the correlation between influence values and labels, we provide a number of useful lemmas:

\vspace{4pt}
\begin{lem}
\label{l:lem1}
(On the first and second partial derivatives of the binary \vspace{3pt} cross entropy loss function).
Let $\zeta \leftarrow \{u, y\}$ and $\mathcal{L}(\zeta, \theta_{\mathscr{C}})$ be an input point and loss function as in relation (\ref{e:eq75}). Let also the parameter set $\mathcal{L}(\zeta, \theta_{\mathscr{C}})$ be as in relation (\ref{e:eq74}) and $f(u) \leftarrow \sum_{i=0}^{\mathsf{D}-1} w_i u_i + b$. Then the following relations hold:

\begin{equation}
\label{e:eq76}
\displaystyle\frac{\partial \mathcal{L}(\zeta, \theta_{\mathscr{C}})}{\partial f(u)}=  -y + \displaystyle\frac{1}{1 +e^{-f(u)}}
\end{equation}

\begin{equation}
\label{e:eq77}
\displaystyle\frac{\partial^2 \mathcal{L}(\zeta, \theta_{\mathscr{C}})}{\partial f(u)^2}= \displaystyle\frac{e^{-f(u)}}{(1 + e^{-f(u)})^2}
\end{equation}

\begin{equation}
\label{e:eq78}
\displaystyle\frac{\partial \mathcal{L}(\zeta, \theta_{\mathscr{C}})}{\partial w_i}=  \big(-y + \displaystyle\frac{1}{1 +e^{-f(u)}}\big) \cdot u_i, \>\>
\displaystyle\frac{\partial \mathcal{L}(\zeta, \theta_{\mathscr{C}})}{\partial b}=  -y + \displaystyle\frac{1}{1 +e^{-f(u)}}
\end{equation}

and

\begin{equation}
\label{e:eq79}
\displaystyle\frac{\partial^2 \mathcal{L}(\zeta, \theta_{\mathscr{C}})}{\partial w_i \partial w_j}= \displaystyle\frac{e^{-f(u)}}{(1 + e^{-f(u)})^2} \cdot u_i u_j, \>\>
\displaystyle\frac{\partial^2 \mathcal{L}(\zeta, \theta_{\mathscr{C}})}{\partial w_i \partial b}= \displaystyle\frac{e^{-f(u)}}{(1 + e^{-f(u)})^2} \cdot u_i
\end{equation}

where $i, j \in[0, \mathsf{D}-1]$.
\end{lem}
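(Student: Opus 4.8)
The plan is to view the loss as a composition $\mathcal{L}(\zeta,\theta_{\mathscr{C}}) = \ell(f(u))$, where $f(u) = \sum_{i=0}^{\mathsf{D}-1} w_i u_i + b$ is the pre-activation and $\ell(s) = -\bigl(y\,\mathsf{ln}(\mathsf{sigm}(s)) + (1-y)\,\mathsf{ln}(1-\mathsf{sigm}(s))\bigr)$ is a scalar function of $s$ for fixed label $y$. I would first differentiate the outer function $\ell$ with respect to $s$, and then recover all parameter derivatives by the chain rule using $\partial f(u)/\partial w_i = u_i$ and $\partial f(u)/\partial b = 1$. The only non-elementary ingredient needed is the sigmoid identity $\mathsf{sigm}'(s) = \mathsf{sigm}(s)\,(1-\mathsf{sigm}(s))$ together with $1 - \mathsf{sigm}(s) = e^{-s}/(1+e^{-s})$, both of which follow at once from $\mathsf{sigm}(s) = 1/(1+e^{-s})$.

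To obtain (\ref{e:eq76}), I would differentiate $\ell$ term by term, getting $\partial\mathcal{L}/\partial f(u) = -y\,\mathsf{sigm}'(f)/\mathsf{sigm}(f) + (1-y)\,\mathsf{sigm}'(f)/(1-\mathsf{sigm}(f))$, where the sign on the second term comes from the chain rule applied to $\mathsf{ln}(1-\mathsf{sigm})$. Substituting $\mathsf{sigm}'(f) = \mathsf{sigm}(f)(1-\mathsf{sigm}(f))$ cancels the denominators and collapses the expression to $-y(1-\mathsf{sigm}(f)) + (1-y)\mathsf{sigm}(f) = \mathsf{sigm}(f) - y$, which is exactly $-y + 1/(1+e^{-f(u)})$. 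For (\ref{e:eq77}), since the $-y$ term is constant in $f$, one more differentiation gives $\partial^2\mathcal{L}/\partial f(u)^2 = \mathsf{sigm}'(f) = \mathsf{sigm}(f)(1-\mathsf{sigm}(f)) = \bigl(1/(1+e^{-f})\bigr)\bigl(e^{-f}/(1+e^{-f})\bigr) = e^{-f(u)}/(1+e^{-f(u)})^2$.

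Finally, for (\ref{e:eq78}) the chain rule gives $\partial\mathcal{L}/\partial w_i = (\partial\mathcal{L}/\partial f(u))\,u_i$ and $\partial\mathcal{L}/\partial b = \partial\mathcal{L}/\partial f(u)$, and substituting (\ref{e:eq76}) yields the stated forms; for (\ref{e:eq79}) I would differentiate $\partial\mathcal{L}/\partial w_i = (-y + 1/(1+e^{-f(u)}))u_i$ with respect to $w_j$ and $b$, observing that $u_i$ does not depend on the parameters, so that $\partial^2\mathcal{L}/\partial w_i\partial w_j = (\partial^2\mathcal{L}/\partial f(u)^2)\,u_i u_j$ and $\partial^2\mathcal{L}/\partial w_i\partial b = (\partial^2\mathcal{L}/\partial f(u)^2)\,u_i$, and then plugging in (\ref{e:eq77}). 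There is no real obstacle: the statement is a direct computation. The only points that warrant mild care are the sign bookkeeping in the term-by-term differentiation of the cross-entropy, and the observation that no case split on $y$ is needed, since every identity holds with $y$ treated as a fixed real constant.
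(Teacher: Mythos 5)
Your proposal is correct and follows essentially the same route as the paper: compute $\partial\mathcal{L}/\partial f(u)$ first, differentiate once more for the second derivative, and recover all parameter derivatives via the chain rule with $\partial f/\partial w_i = u_i$ and $\partial f/\partial b = 1$. The only cosmetic difference is that you invoke the identity $\mathsf{sigm}'(s)=\mathsf{sigm}(s)(1-\mathsf{sigm}(s))$ where the paper instead writes $1-\mathsf{sigm}(f)$ explicitly as $e^{-f(u)}/(1+e^{-f(u)})$ and differentiates the logarithms directly; both yield the same computation.
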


\vspace{4pt}
\begin{proof}
It holds that:

\begin{equation}
\label{e:eq80}
\begin{array}{l}
\displaystyle\frac{\partial \mathcal{L}(\zeta, \theta_{\mathscr{C}})}{\partial f(u)}=  \Big(-y\cdot \mathsf{ln}\big(\displaystyle\frac{1}{1 + e^{-f(u)}}\big)\Big)'+\Big( (y-1)\cdot\mathsf{ln}\big( \displaystyle\frac{e^{-f(u)}}{1 + e^{-f(u)}}\big)\Big)'
\vspace{4pt}\\
\hspace{47pt}
= - y\cdot \displaystyle\frac{e^{-f(u)}}{1 + e^{-f(u)}}-(y-1) \cdot \displaystyle\frac{1}{1 + e^{-f(u)}}
\vspace{4pt}\\
\hspace{47pt}
= \displaystyle\frac{-ye^{-f(u)}-y+1}{1 + e^{-f(u)}} = -y + \displaystyle\frac{1}{1+e^{-f(u)}}
\end{array}
\end{equation}

which completes the proof of the first statement of Lemma \ref{l:lem1}. The second statement can be shown in a similar manner, using relation (\ref{e:eq80}):

\begin{equation}
\label{e:eq81}
\displaystyle\frac{\partial^2 \mathcal{L}(\zeta, \theta_{\mathscr{C}})}{\partial f(u)^2} =  \Big( -y + \displaystyle\frac{1}{1 + e^{-f(u)}}\Big)'= \displaystyle\frac{e^{-f(u)}}{(1 + e^{-f(u)})^2}
\end{equation}

Finally the last two statements of the lemma follow directly from relations (\ref{e:eq80}) and (\ref{e:eq81}) and the chain rule of differentiation.

\end{proof}

The correlation between influence and labels is established from Lemma \ref{l:lem1} and the Lemma and Definition that follow.

\vspace{4pt}
\begin{lem}
\label{l:lem2}
(On the sign of the gradient of the loss function).
Let $\zeta \leftarrow \{u, y\}$ be a point such that
$u_i \geq 0 \> \forall i \in [0, \mathsf{D}-1]$, and $\mathcal{L}(\zeta, \theta_{\mathscr{C}})$ a loss function as in relation (\ref{e:eq75}). Finally, let the parameter set
$\theta_{\mathscr{C}}$ be as in relation (\ref{e:eq74}) and $\mathsf{sign}()$ the sign function defined by $\mathsf{sign}(x) = 1(x \geq 0) - 1(x < 0)$. Then, for every element $g_i, i \in [0, \mathsf{D}] $ of the gradient vector $\nabla_{\theta_{\mathscr{C}}}\mathcal{L}(\zeta, \theta_{\mathscr{C}}) $ it holds that:

\begin{equation}
\label{e:eq82}
\mathsf{sign}(g_i)= 1(y=0) -1(y=1)
\end{equation}
\end{lem}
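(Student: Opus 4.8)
The plan is to read the gradient components straight off Lemma~\ref{l:lem1} and to observe that all of them share a common scalar factor whose sign is dictated solely by the label $y \in \{0,1\}$. By equation~(\ref{e:eq78}), for $i \in [0,\mathsf{D}-1]$ the weight gradient is $g_i = \partial\mathcal{L}(\zeta,\theta_{\mathscr{C}})/\partial w_i = s(u)\, u_i$ and the bias gradient is $g_{\mathsf{D}} = \partial\mathcal{L}(\zeta,\theta_{\mathscr{C}})/\partial b = s(u)$, where I write $s(u) = -y + (1+e^{-f(u)})^{-1} = \mathsf{sigm}(f(u)) - y$. So the lemma reduces to determining $\mathsf{sign}(s(u))$ and checking that multiplying it by the nonnegative numbers $u_i$ does not alter the sign.

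First I would observe that $\mathsf{sigm}(f(u)) = (1+e^{-f(u)})^{-1} \in (0,1)$ for every finite $f(u)$, because $e^{-f(u)} > 0$. Since the loss in~(\ref{e:eq75}) is the binary cross entropy, $y$ takes only the values $0$ and $1$. If $y = 0$ then $s(u) = \mathsf{sigm}(f(u)) > 0$; if $y = 1$ then $s(u) = \mathsf{sigm}(f(u)) - 1 < 0$. In both cases $\mathsf{sign}(s(u)) = 1(y=0) - 1(y=1)$, which already settles the bias coordinate $g_{\mathsf{D}}$.

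For the weight coordinates I would invoke the hypothesis $u_i \geq 0$: multiplying a strictly positive (resp.\ strictly negative) scalar by a nonnegative $u_i$ gives a nonnegative (resp.\ nonpositive) number, so $\mathsf{sign}(g_i) = \mathsf{sign}(s(u) u_i)$ coincides with $\mathsf{sign}(s(u))$, hence with $1(y=0) - 1(y=1)$. Running this over all $i \in [0,\mathsf{D}]$ finishes the proof, and nothing beyond Lemma~\ref{l:lem1}, the range of the sigmoid, and sign preservation under multiplication by a nonnegative scalar is needed.

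The only delicate point --- and the main obstacle --- is the degenerate coordinate with $u_i = 0$: there $g_i = 0$, and under the stated convention $\mathsf{sign}(0) = 1(0 \geq 0) - 1(0 < 0) = 1$, so the identity can fail when $y = 1$. I would deal with this either by strengthening the hypothesis to $u_i > 0$ (the case of interest, since the $u_i$ are the active featurizer outputs feeding the classifier) or by restricting the claim to coordinates with $u_i \neq 0$; on that restricted set the argument above applies unchanged.
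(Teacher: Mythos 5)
Your proof is essentially the paper's own: both read the gradient components off Lemma~\ref{l:lem1} via the chain rule, determine the sign of the common factor $-y + \mathsf{sigm}(f(u))$ from $\mathsf{sigm}(f(u)) \in (0,1)$ and $y \in \{0,1\}$, and use $u_i \geq 0$ to transfer that sign to $g_i$. Your observation about the degenerate coordinate $u_i = 0$ (where $g_i = 0$ and the convention $\mathsf{sign}(0)=1$ breaks the claimed identity for $y=1$) is a genuine edge case that the paper's proof silently glosses over, and your proposed fixes are the right ones.
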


\vspace{4pt}
\begin{proof}
From the chain rule of differentiation, and for $i \in [0, \mathsf{D}-1]$, it holds that:

\begin{equation}
\label{e:eq83}
g_i = \displaystyle\frac{\partial \mathcal{L}(\zeta, \theta_{\mathscr{C}})}{\partial f(u)} \cdot\displaystyle\frac{\partial f(u)}{w_i}
\end{equation}

Furthermore, from the definition of the classifier as a single linear layer it holds that:

\begin{equation}
\label{e:eq84}
\displaystyle\frac{\partial f(u)}{w_i}=u_i
\end{equation}

Since it holds that $u_i \geq 0$ from the hypothesis, the sign of $g_i$ is given by:

\begin{equation}
\label{e:eq85}
\mathsf{sign}(g_i)=\mathsf{sign}\big(\displaystyle\frac{\partial \mathcal{L}(\zeta, \theta_{\mathscr{C}})}{\partial f(u)}\big)= \mathsf{sign}\big(-y + \displaystyle\frac{1}{1 +e^{-f(u)}} \big)
\end{equation}

When $y=1$, $\mathsf{sign}(g_i)=\mathsf{sign}(-\displaystyle\frac{e^{-f(u)}}{1 + e^{-f(u)}})=-1$. When  $y=0$, $\mathsf{sign}(g_i)=\mathsf{sign}(\displaystyle\frac{1}{1 + e^{-f(u)}})=1$.  Hence, the lemma is proven for  $i \in [0, \mathsf{D}-1]$. The proof is similar for the last element of $g$.

\end{proof}

\vspace{4pt}
\begin{defn}
\label{d:defn9}
(Inverse Hessian matrix with almost positive elements).
Let $\mathscr{F}()$, $\mathscr{C}()$ be a featurizer-classifier pair as in the Theorem \ref{t:thm1} with the classifier $\mathscr{C}()$ consisting of $\mathsf{D}+1$ model parameters, $\mathcal{Z}$ a training data set with
$\mathscr{Z}$ being its associated set of points passed as input to the classifier, $\mathscr{Z}_g$ a set of points referred to as the ``gradient set'', and $\mathcal{L}()$ a loss function. The inverse Hessian matrix $H_{\theta_{\mathscr{C}}}^{-1}$ defined by:

\begin{equation}
\label{e:eq86}
H_{\theta_{\mathscr{C}}}^{-1} = \begin{pmatrix} \displaystyle\frac{1}{|\mathscr{Z}|} \displaystyle\sum_{\zeta^{(p)} \leftarrow \{u^{(p)}, y^{(p)}\} \in \mathscr{Z}} \begin{pmatrix} \displaystyle\frac{\partial^2 \mathcal{L}(\zeta^{(p)}, \theta_{\mathscr{C}})}{\partial w_0^2}) & \cdots & \displaystyle\frac{\partial^2 \mathcal{L}(\zeta^{(p)}, \theta_{\mathscr{C}})}{\partial w_0 \partial b } \\ \vdots & \ddots & \vdots \\  \displaystyle\frac{\partial^2 \mathcal{L}(\zeta^{(p)}, \theta_{\mathscr{C}})}{\partial b \partial w_0 } & \cdots & \displaystyle\frac{\partial^2 \mathcal{L}(\zeta^{(p)}, \theta_{\mathscr{C}})}{\partial b^2}  \end{pmatrix} \end{pmatrix}^{-1}
\end{equation}

\vspace{4pt}
is called inverse Hessian with almost positive elements with respect  to gradient set $\mathscr{Z}_g$ and loss function $\mathcal{L}()$ if each of the elements of the matrix $h_{i,j}, i, j \in [0, \mathsf{D}]$ satisfies:

\begin{equation}
\label{e:eq87}
\mathsf{sign}(h_{i,j} + \delta_{i,j})=1
\end{equation}

where the sum of the additive terms $\delta_{i,j}$ is negligible when each of the additive terms $\delta_{i,j}$ is multiplied by any element  of gradient value $\nabla_{\theta_{\mathscr{C}}}\mathcal{L}(\zeta_g^{(0)}, \theta_{\mathscr{C}})$, and any element of gradient value $\nabla_{\theta_{\mathscr{C}}}\mathcal{L}(\zeta_g^{(1)}, \theta_{\mathscr{C}})$, for any $\zeta_g^{(0)}, \zeta_g^{(1)} \in \mathscr{Z}_g$.
\end{defn}

\vspace{4pt}
Having stated and proven Lemmas \ref{l:lem1} and \ref{l:lem2}, and having provided Definition \ref{d:defn9}, we proceed with stating the main theorem of this section, which establishes the correlation that exists between influence values and labels under certain assumptions.

\vspace{4pt}
\begin{thm}
\label{t:thm4}
(On the correlation that exists between influence functions and labels). Let
$\mathscr{F}()$, $\mathscr{C}()$ be a featurizer-classifier pair as in the Theorem \ref{t:thm1}, for which Corollaries \ref{c:cor1} and \ref{c:cor2} hold, the additive terms $\mathcal{T}_1$ and $\mathcal{T}_2$ of relation (\ref{e:eq38}) are negligible, the classifier $\mathscr{C}()$ is a single layer linear classifier as in Definition \ref{d:defn8}, and the assumptions of Lemma \ref{l:lem2} hold. Furthermore, let the loss function under consideration $\mathcal{L}()$ be the binary cross entropy loss function given by relation (\ref{e:eq75}). Finally, let $\mathcal{Z}$ be a training data set of cardinality $n$ and $\mathscr{Z}$ its associated set of points passed as input to the classifier, given by relation (\ref{e:eq31}). Then, for every training point $z_p \leftarrow \{x_p, y_p\}\in \mathcal{Z}$ and its associated $\zeta_p \leftarrow \{\mathscr{F}(x_p), y_p\}\in \mathscr{Z}$, and for every test point $z_t \leftarrow \{x_t, y_t\}$ and its associated $\zeta_t \leftarrow \{\mathscr{F}(x_t), y_t\}$, such that the matrix $H_{\theta_{\mathscr{C}}}^{-1}$ is an inverse Hessian matrix with almost positive elements with respect to gradient set $\mathscr{Z}_g \leftarrow \mathscr{Z} \> \cup \> \{\zeta_t\}$ and loss function $\mathcal{L}()$, the following hold:

\begin{equation}
\label{e:eq88}
\mathsf{sign}(I_{loss}(\hat{\theta}, z_t, z_p))=1(y_t \neq y_p)-1(y_t = y_p)
\end{equation}

\begin{equation}
\label{e:eq89}
\mathcal{L}(z_t, \hat{\theta}_{-\frac{1}{n}, z_p}) \geq \mathcal{L}(z_t, \hat{\theta})
\>\>\> \normalfont\text{if} \> y_t = 0 \>
\normalfont\text{and} \> y_p=0
\end{equation}

\begin{equation}
\label{e:eq90}
\mathcal{L}(z_t, \hat{\theta}_{-\frac{1}{n}, z_p}) < \mathcal{L}(z_t, \hat{\theta})
\>\>\> \normalfont\text{if} \> y_t = 0 \> \normalfont\text{and} \> y_p=1
\end{equation}

\begin{equation}
\label{e:eq91}
\mathcal{L}(z_t, \hat{\theta}_{-\frac{1}{n}, z_p}) < \mathcal{L}(z_t, \hat{\theta})
\>\>\> \normalfont\text{if} \> y_t = 1 \>
\normalfont\text{and} \> y_p=0
\end{equation}

\begin{equation}
\label{e:eq92}
\mathcal{L}(z_t, \hat{\theta}_{-\frac{1}{n}, z_p}) \geq \mathcal{L}(z_t, \hat{\theta})
\>\>\> \normalfont\text{if} \> y_t = 1 \>
\normalfont\text{and} \> y_p=1
\end{equation}

where in the above relations, negligible additive terms have been omitted.
\end{thm}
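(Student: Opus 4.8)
The plan is to reduce all six conclusions to a single sign computation for the scalar $g_t^\top H_{\hat{\theta}_{\mathscr{C}}}^{-1} g_p$, where $g_t = \nabla_{\theta_{\mathscr{C}}}\mathcal{L}(\zeta_t, \hat{\theta}_{\mathscr{C}})$ and $g_p = \nabla_{\theta_{\mathscr{C}}}\mathcal{L}(\zeta_p, \hat{\theta}_{\mathscr{C}})$ as in (\ref{e:eq39})--(\ref{e:eq40}). First I would use Theorem \ref{t:thm1}: since Corollaries \ref{c:cor1} and \ref{c:cor2} hold and $\mathcal{T}_1, \mathcal{T}_2$ are negligible, the ratio (\ref{e:eq38}) equals $1$ up to a negligible additive error, so $I_{loss}(\hat{\theta}, z_t, z_p)$ has the same sign as $I_{loss}^{(\mathscr{C})}(\hat{\theta}_{\mathscr{C}}, \zeta_t, \zeta_p)$, which by (\ref{e:eq35}) equals $\mathsf{sign}(-g_t^\top H_{\hat{\theta}_{\mathscr{C}}}^{-1} g_p)$. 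I would then identify $H_{\hat{\theta}_{\mathscr{C}}}^{-1}$ with the matrix (\ref{e:eq86}) evaluated at $\hat{\theta}_{\mathscr{C}}$ over $\mathscr{Z}$, which by hypothesis is an inverse Hessian with almost positive elements with respect to the gradient set $\mathscr{Z}_g = \mathscr{Z} \cup \{\zeta_t\}$; I write $h_{ij}$ for its entries and $\delta_{ij}$ for the corrections of Definition \ref{d:defn9}, so that $h_{ij} + \delta_{ij} > 0$ for all $i, j \in [0, \mathsf{D}]$.

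Next I would pin down the signs of the two gradients. The classifier is a single linear layer (Definition \ref{d:defn8}), the loss is binary cross entropy (\ref{e:eq75}), and the assumptions of Lemma \ref{l:lem2} hold (componentwise nonnegative featurizer outputs), so applying Lemma \ref{l:lem2} once to $\zeta_p$ and once to $\zeta_t$ gives that every component of $g_p$ has sign $\sigma_p := 1(y_p = 0) - 1(y_p = 1)$ and every component of $g_t$ has sign $\sigma_t := 1(y_t = 0) - 1(y_t = 1)$, with the bias components $-y + 1/(1+e^{-f(u)})$ nonzero by (\ref{e:eq78}). I would then expand $g_t^\top H_{\hat{\theta}_{\mathscr{C}}}^{-1} g_p = \sum_{i,j} (g_t)_i h_{ij} (g_p)_j$, replace each $h_{ij}$ by $h_{ij} + \delta_{ij}$ (the aggregate change $\sum_{i,j}(g_t)_i \delta_{ij} (g_p)_j$ being negligible precisely because $\zeta_t, \zeta_p \in \mathscr{Z}_g$, which is what Definition \ref{d:defn9} requires), and observe that each nonzero corrected summand has sign $\sigma_t (+1) \sigma_p = \sigma_t \sigma_p$, with the bias--bias term nonzero. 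Hence the bilinear form has sign $\sigma_t \sigma_p$ up to negligible terms, so $\mathsf{sign}(I_{loss}(\hat{\theta}, z_t, z_p)) = -\sigma_t \sigma_p$; since $y_t, y_p \in \{0,1\}$, $\sigma_t \sigma_p$ is $+1$ exactly when $y_t = y_p$ and $-1$ otherwise, which is exactly (\ref{e:eq88}).

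For (\ref{e:eq89})--(\ref{e:eq92}) I would invoke Corollary \ref{c:cor2}, which via (\ref{e:eq25}) and (\ref{e:eq27}) gives $\mathcal{L}(z_t, \hat{\theta}_{-\frac{1}{n}, z_p}) - \mathcal{L}(z_t, \hat{\theta}) = -\frac{1}{n} I_{loss}(\hat{\theta}, z_t, z_p)$, and then plug in (\ref{e:eq88}) over the four cases of $(y_t, y_p) \in \{0,1\}^2$: when $y_t = y_p$ the influence is negative so the loss increases, giving (\ref{e:eq89}) and (\ref{e:eq92}) as weak inequalities (negligible terms omitted), and when $y_t \neq y_p$ the influence is positive so the loss strictly decreases, giving (\ref{e:eq90}) and (\ref{e:eq91}).

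The hard part will be the bookkeeping of the three nested layers of ``negligible'' --- those absorbed in Corollaries \ref{c:cor1} and \ref{c:cor2}, the terms $\mathcal{T}_1, \mathcal{T}_2$ of Theorem \ref{t:thm1}, and the corrections $\delta_{ij}$ of Definition \ref{d:defn9} --- and in particular checking that the $\delta_{ij}$ remain negligible after being multiplied by components of $g_t$ and $g_p$. This is the reason the hypothesis is stated with $\mathscr{Z}_g = \mathscr{Z} \cup \{\zeta_t\}$: it must contain both $\zeta_p \in \mathscr{Z}$ and the test point $\zeta_t$ so that Definition \ref{d:defn9} applies directly to the product $\sum_{i,j}(g_t)_i \delta_{ij} (g_p)_j$. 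Everything else is a routine assembly of Theorem \ref{t:thm1}, Lemma \ref{l:lem2}, Corollary \ref{c:cor2}, and Definition \ref{d:defn9}.
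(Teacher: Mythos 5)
Your proposal is correct and follows essentially the same route as the paper's own proof: reduce (\ref{e:eq88}) to the sign of the bilinear form $g_t^\top H_{\theta_{\mathscr{C}}}^{-1} g_p$ via Theorem \ref{t:thm1} with $\mathcal{T}_1,\mathcal{T}_2$ negligible, fix the componentwise gradient signs with Lemma \ref{l:lem2}, absorb the $\delta_{i,j}$ corrections using Definition \ref{d:defn9} with $\mathscr{Z}_g = \mathscr{Z}\cup\{\zeta_t\}$, and derive (\ref{e:eq89})--(\ref{e:eq92}) from (\ref{e:eq88}) together with relation (\ref{e:eq25}). Your added remark that the bias--bias summand is nonzero (ensuring the sum is strictly signed) is a small refinement the paper leaves implicit, but the argument is otherwise identical.
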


\vspace{4pt}
\begin{proof}
We observe that statements (\ref{e:eq89})-(\ref{e:eq92}) follow directly from \vspace{2pt}statement (\ref{e:eq88}), the assumption that Corollaries \ref{c:cor1} and \ref{c:cor2} hold and relation (\ref{e:eq25}). Therefore, to prove \vspace{2pt}Theorem \ref{t:thm4}, we need to show that statement (\ref{e:eq88}) holds.
Since the additive terms $\mathcal{T}_1$ and $\mathcal{T}_2$ of relation (\ref{e:eq38}) are negligible, the influence function $I_{loss}^{(\mathscr{C})}(\hat{\theta}_{\mathscr{C}}, \zeta_t, \zeta_p)$ is a good
\vspace{2pt}approximation of the influence function $I_{loss}(\hat{\theta}, z_t, z_p)$. Omitting negligible additive terms, we write that:

\begin{equation}
\label{e:eq93}
\begin{array} {l}
\mathsf{sign}(I_{loss}(\hat{\theta}, z_t, z_p))= \mathsf{sign}(I_{loss}^{(\mathscr{C})}(\hat{\theta}_{\mathscr{C}}, \zeta_t, \zeta_p))
\vspace{4pt}\\
\hspace{85pt}
=-\mathsf{sign}(\nabla_{\theta_{\mathscr{C}}}\mathcal{L}(\zeta_t, \theta_{\mathscr{C}}) \cdot H_{\theta_{\mathscr{C}}}^{-1}  \cdot \nabla_{\theta_{\mathscr{C}}}\mathcal{L}(\zeta_p, \theta_{\mathscr{C}}) )
\end{array}
\end{equation}

We continue with the proof denoting the elements of $\nabla_{\theta_{\mathscr{C}}}\mathcal{L}(\zeta_p, \theta_{\mathscr{C}})$ by $g_i^{(p)}$, the elements of $\nabla_{\theta_{\mathscr{C}}}\mathcal{L}(\zeta_t, \theta_{\mathscr{C}})$ by $g_i^{(t)}$, and the elements of the
\vspace{2pt}inverse Hessian $H_{\theta_{\mathscr{C}}}^{-1}$ by $h_{i,j}$, where $i, j \in[0, \mathsf{D}]$. Relation (\ref{e:eq93}) is written as:

\begin{equation}
\label{e:eq94}
\begin{array} {l}
\mathsf{sign}(I_{loss}(\hat{\theta}, z_t, z_p))=-\mathsf{sign}\Big(\displaystyle\sum_{i=0}^{\mathsf{D}} \> \displaystyle\sum_{j=0}^{\mathsf{D}} \> g^{(t)}_i h_{i,j} g_j^{(p)} \Big)
\vspace{4pt}\\
\hspace{85pt}
= -\mathsf{sign}\Big(\displaystyle\sum_{i=0}^{\mathsf{D}} \> \displaystyle\sum_{j=0}^{\mathsf{D}} \> g^{(t)}_i \cdot (h_{i,j} + \delta_{i,j})\cdot g_j^{(p)} \Big)
\end{array}
\end{equation}

The terms $\delta_{i,j}$ are introduced in relation (\ref{e:eq93}) because, as stated, their sum is negligible when each term is multiplied with gradient elements $g_i^{(t)}$,  $g_j^{(p)}$. Next we examine the sign of each of the terms $ g^{(t)}_i \cdot (h_{i,j} + \delta_{i,j})\cdot g_j^{(p)} $ that appear in relation (\ref{e:eq94}). Factor $(h_{i,j} + \delta_{i,j})$ is \vspace{3pt}always non-negative from the assumption about the inverse Hessian matrix. On the other hand, from Lemma \ref{l:lem2} it holds that, if labels $y_t, y_p$ are equal then the  \vspace{2pt}product $g_i^{(t)}g_j^{(p)}$ has sign equal to
$1$ , and if they are different has sign equal to $-1$. Hence:

\begin{equation}
\label{e:eq95}
 \mathsf{sign}(g^{(t)}_i \cdot (h_{i,j} + \delta_{i,j})\cdot g_j^{(p)})=1(y_t = y_p)-1(y_t \neq y_p) \> \forall i, j \in [0, \mathsf{D}]
\end{equation}

The statement (\ref{e:eq88}) follows directly by combining relations (\ref{e:eq94}) and (\ref{e:eq95}). This completes the proof of Theorem \ref{t:thm4}.

\end{proof}

\section{Discussion}
We have shown that influence functions computed on the parameters of a classifier can be good approximations of the standard theory influence functions. This is true when the eigenvalues of the inverse Hessian, associated with the classifier parameters, are the most dominant ones from among all inverse Hessian eigenvalues,  and the same applies to the corresponding eigenvectors. Moreover, the sign of the influence value indicates whether a training point is to memorize, as opposed to generalize upon. Thus, keeping the training points to memorize and sampling from the training points to generalize may be a good sampling methodology resulting in accuracy improvement.

If for a machine learning system the assumptions of Theorem \ref{t:thm4} hold, then removing negative samples improves the accuracy of the predictions on the positives but degrades the accuracy of the predictions on the negatives. The opposite is also true. This means that given a system accuracy metric such as Area-Under-the-Curve (AUC) or average Binary Cross Entropy (BCE) loss, one can optimize for this metric by  adjusting the ratio of positives over negatives. Investigating the validity of the assumptions of Theorem \ref{t:thm4} experimentally is the subject of future work.

\section*{References}

\medskip

\small
[1] P. W. Koh and P. Liang, ``Understanding Black Box Predictions with Influence Functions'', arXiv:1703.04730

[2]  Christopher R. Palmer and Christos Faloutsos, ``Density biased sampling: An improved method for data mining and clustering'', SIGMOD Rec., 29(2):82–92, May 2000. ISSN 0163-5808. doi: 10.1145/335191.335384

[3] Daniel Ting and Eric Brochu, ``Optimal subsampling with influence functions'', In S. Bengio, H. Wallach, H. Larochelle, K. Grauman, N. Cesa-Bianchi, and R. Garnett, editors, Advances in Neural Information Processing Systems, volume 31. Curran Associates, Inc., 2018.

[4] Zifeng Wang, Hong Zhu, Zhenhua Dong, Xiuqiang He, and Shao-Lun Huang, ``Less is better: Unweighted data subsampling via influence functions'', In Proceedings of the AAAI Conference on Artificial Intelligence (AAAI), 2020

[5] Zifeng Wang, Rui Wen, Xi Chen, Shao-Lun Huang, Ningyu Zhang, Yefeng Zheng, ``Finding Influential Instances for Distantly Supervised Relation Extraction'', arxiv.org/abs/2009.09841

[6] F. R. Hampel, ``The influence curve and its role in robust
estimation'', Journal of the American Statistical Association, 69(346):383–393, 1974.

\end{document}